\theoremstyle{plain}
\newtheorem{theorem}{Theorem}[section]
\newtheorem{problem}{Problem}
\newtheorem{lemma}[theorem]{Lemma}
\newtheorem{corollary}[theorem]{Corollary}
\theoremstyle{definition}
\newtheorem{definition}[theorem]{Definition}
\newtheorem{assumption}[theorem]{Assumption}
\theoremstyle{remark}
\newcommand{\mc}{\mathcal}
\icmltitlerunning{Learning Policy Committees with Diverse Tasks}
\begin{document}

\twocolumn[
\icmltitle{Learning Policy Committees for Effective Personalization\\ in MDPs with Diverse Tasks}



\icmlsetsymbol{equal}{*}

\begin{icmlauthorlist}
\icmlauthor{Luise Ge}{yyy}
\icmlauthor{Michael Lanier}{yyy}
\icmlauthor{Anindya Sarkar}{yyy}
\icmlauthor{Bengisu Guresti}{yyy}
\icmlauthor{Chongjie Zhang}{yyy}
\icmlauthor{Yevgeniy Vorobeychik}{yyy}
\end{icmlauthorlist}

\icmlaffiliation{yyy}{Department of Computer Science \& Engineering, Washington University in St. Louis}
\icmlcorrespondingauthor{Luise Ge}{g.luise@wustl.edu}

\icmlkeywords{Machine Learning, ICML}

\vskip 0.3in]



\printAffiliationsAndNotice{}  

\begin{abstract}
Many dynamic decision problems, such as robotic control, involve a series of tasks, many of which are unknown at training time.
Typical approaches for these problems, such as multi-task and meta-reinforcement learning, do not generalize well when the tasks are diverse. On the other hand, approaches that aim to tackle task diversity, such as using task embedding as policy context and task clustering, typically lack performance guarantees and require a large number of training tasks. To address these challenges, we propose a novel approach for learning a \emph{policy committee} that includes at least one near-optimal policy with high probability for tasks encountered during execution. While we show that this problem is in general inapproximable, we present two practical algorithmic solutions.
The first yields provable approximation and task sample complexity guarantees when tasks are low-dimensional (the best we can do due to inapproximability), whereas the second is a general and practical gradient-based approach. In addition, we provide a provable sample complexity bound for few-shot learning. Our experiments on MuJoCo and Meta-World show that the proposed approach outperforms state-of-the-art multi-task, meta-, and task clustering baselines in training, generalization, and few-shot learning, often by a large margin. Our code is available at \url{https://github.com/CERL-WUSTL/PACMAN/}.
\end{abstract}

\section{Introduction}
Reinforcement learning (RL) has achieved remarkable success in a variety of domains, from robotic control~\citep{lillicrap2015continuous} to game playing~\citep{xu2018meta}. However, many real-world applications involve highly diverse sets of tasks, making it impractical to rely on a single, fixed policy. In these settings, both the reward structures and the transition dynamics can vary significantly across tasks. Existing approaches, 
such as multi-task RL (MTRL) and meta-reinforcement learning (meta-RL), struggle to generalize effectively for diverse and previously unseen tasks.

Multi-task RL methods typically train a single policy or a shared representation across tasks~\citep{vithayathil2020survey}. However, they often face negative transfer, where optimizing for one task degrades performance on others~\citep{zhang2022survey}. 
On the other hand, meta-RL approaches, such as Model-Agnostic Meta-Learning (MAML)~\citep{finn2017model} and PEARL~\citep{rakelly2019efficient}, aim to enable fast adaptation to new tasks but rely heavily on fine-tuning at test time, which can be computationally expensive and ineffective in environments with high variability in both rewards and transitions, like Meta-World benchmark tasks~\citep{yu2021metaworldbenchmarkevaluationmultitask}. 


Several lines of work attempt to address the problem of diverse tasks and negative transfer.
The first is to learn policies through MTRL or meta-RL that explicitly take task representation as input~\citep{lan2024contrastive,grigsby2024amagoscalableincontextreinforcement,sodhani2021multi,zintgraf2020varibad}.
However, such approaches rely on shared parameters, which limit the model's flexibility. When tasks are highly diverse, using a single neural network— even with a multi-head architecture— may not generalize effectively. Moreover, the greater the task variation, the more data is typically required to train policies that can effectively condition on task embeddings, and the model will often generalize poorly when the number of training tasks is small.

Several alternative methods have thus emerged that propose clustering tasks and training distinct policies for each cluster~\citep{ackermann2021unsupervised,ivanov2024personalized}.
The associated algorithmic approaches 
commonly leverage EM-style methods that interleave RL and task clustering.
In practice, however, they also require a large number of training tasks to be effective and consequently perform poorly when the number of training tasks is small.
In addition, if the number of clusters is too small, some clusters may still contain too diverse a set of tasks, with concomitant negative transfer remaining a significant issue.

We propose \textsc{pacman}, a novel framework and algorithmic approach for learning {\bf policy committees} that enables (a) sample efficient generalization (Theorem~\ref{thm:greedy_gaurantee_final}), (b) few-shot adaptation guarantees that are \emph{independent of the dimension of the state or action space} (Theorem~\ref{thm:online_repetition}), and (c) significant improvements in performance compared to 11 state-of-the-art baselines on both MTRL and few-shot adaptation metrics (Section~\ref{S:exp}).
The key idea behind \textsc{pacman} is to leverage a parametric task representation which allows clustering tasks in the parameter space first, followed by reinforcement learning for each cluster. The benefit of our approach is illustrated in Figure~\ref{fig:intro_comparison}: even though the single-policy baseline uses a mixture-of-experts approach~\citep{hendawy2024multi}, its performance is poor since tasks require fundamentally distinct skills.
In contrast, learning a policy committee allows learning custom policies for distinct classes of tasks.

\begin{figure}[t] 
    \centering
    \includegraphics[width=0.45\textwidth]{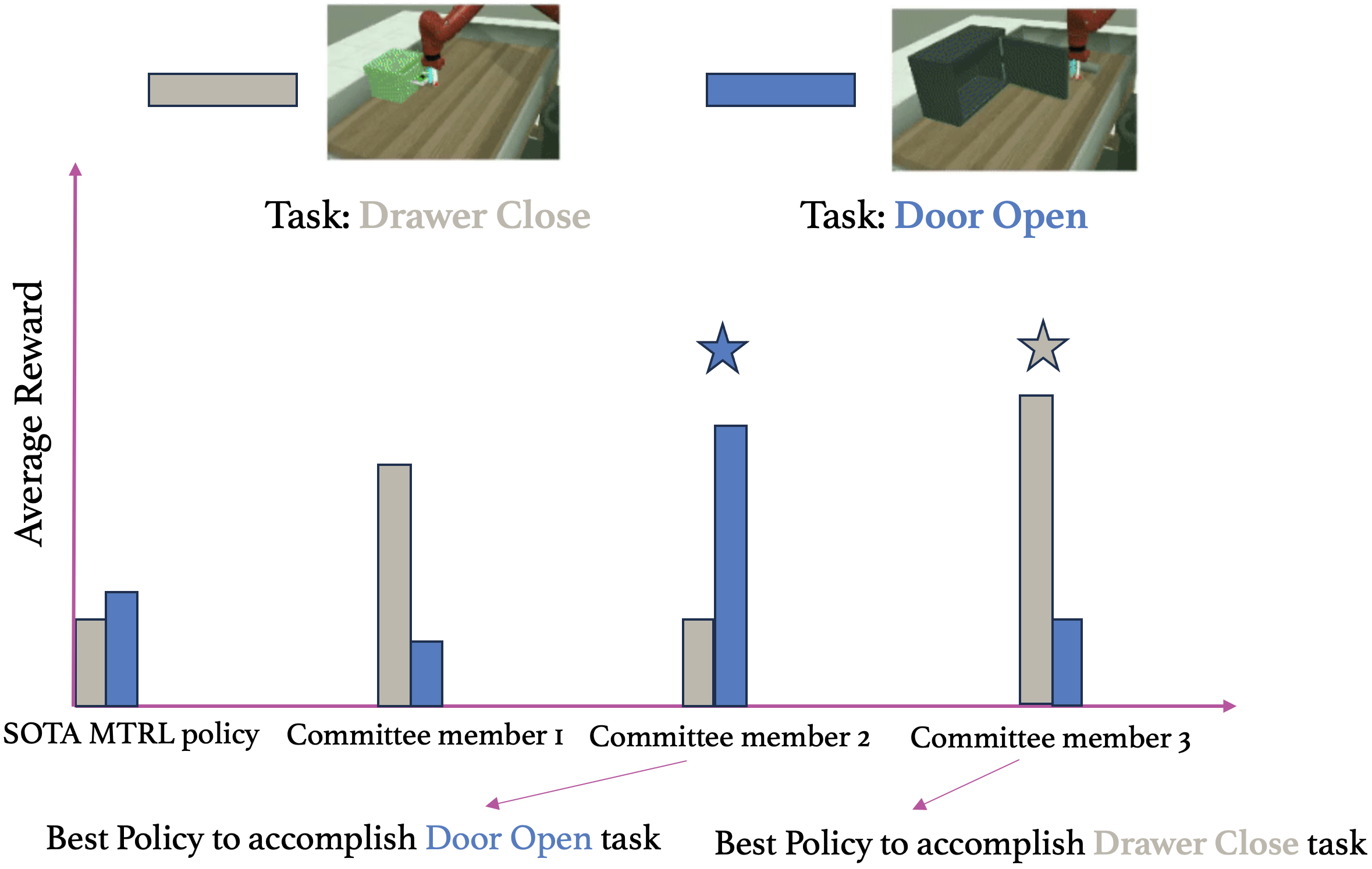}
    \caption{Performance on a single task across committee members compared to a MTRL policy.}
    \label{fig:intro_comparison}
\end{figure}
A crucial insight in our approach is to define the objective of clustering as obtaining \emph{high coverage} of the (unknown) task distribution, as opposed to \emph{insisting on a full coverage}---that is, achieving near-optimal performance on a randomly drawn task with high probability.
This enables us to devise efficient algorithms with provable performance guarantees and polynomial task sample complexity, as well as obtain few-shot adaptation guarantees that depends only on the number of clusters, but not on the size of state and action space.
Moreover, while many MTRL and meta-RL problems of interest are non-parametric, we show that we can leverage high-level natural language task descriptions to obtain highly effective embeddings of tasks, akin to \citet{sodhani2021multi}, and \citet{hendawy2024multi}, but without requiring subtask decomposition.
Consequently, our approach can be effectively applied to a broad class of non-parametric MTRL and meta-RL domains as well.



In summary, we make the following contributions:
\begin{itemize}[itemsep=0pt,topsep=0pt,leftmargin=*]

    \item {\bf Theoretically Grounded Framework for Learning Policy Committees:} We present the first approach for learning policy committees in MTRL and meta-RL domains that offers provable performance guarantees, in addition to practical algorithms. In particular, we provide sample efficient generalization (Theorem~\ref{thm:greedy_gaurantee_final}) and few-shot adaptation guarantees that are independent of the size of the state or action space (Theorem~\ref{thm:online_repetition}).

    \item {\bf Simple and Effective Policy Committees}: \textsc{pacman} is design to handle high task diversity while using a small number of policies to achieve high efficacy.
    Additionally, our approach leverages LLM-based task embeddings for non-parametric tasks, which provides a highly general and scalable solution for a broad array of environments.
    
    
    \item {\bf Empirical Validation:}     We demonstrate the efficacy of \textsc{pacman} through extensive experiments on challenging multi-task benchmarks, including MuJoCo and Meta-World. 
    Our policy committee framework consistently outperforms state-of-the-art multi-task RL, meta-RL, and task clustering baselines in both zero-shot (MTRL) and few-shot (meta-RL) settings, achieving better generalization and faster adaptation across diverse tasks.
\end{itemize}

\noindent\textbf{Related Work: }
Our work is closely related to three key areas within the broader reinforcement learning literature: multi-task RL, meta-RL, and personalized RL.

\noindent\textit{Multi-Task RL (MTRL):}
The broad goal of MTRL approaches is to leverage inter-task relationships to learn policies that are effective across multiple tasks~\citep{yang2020multi,sodhani2021multi,sun2022paco}. 
An important challenge in MTRL is task interference.
One class of approaches aims to mitigate this issue through gradient alignment techniques~\citep{hessel2019multi,yu2020gradient}.
An alternative series of approaches addresses this issue
from a representation learning perspective, enabling the learning of policies that explicitly condition on task embeddings~\cite{hendawy2024multi,lan2024contrastive,sodhani2021multi}.

However, most MTRL methods still face challenges when tasks are diverse, particularly when it comes to generalizing to \emph{previously unseen tasks}.

\noindent\textit{Meta-RL:}
The goal of meta-RL is an ability to quickly adapt to unseen tasks---what we refer to as \emph{few-shot learning}.
Meta-RL methods can be categorized broadly into two categories: (i) gradient-based and (ii) context-based (where context may include task-specific features).
Gradient-based approaches 
focus on learning a shared initialization of a model across tasks that is explicitly trained to facilitate few-shot learning~\citep{finn2017model,stadie2018importance,mendonca2019guided,zintgraf2019fast}.
However, they perform poorly in zero-shot generalization, and tend to require a large number of adaptation steps.
Context-based methods learn a context representation and use it as a policy input~\citep{bing2023meta,gupta2018meta,duan2016rl,lee2020stochastic,lee2020context,lee2023parameterizing,rakelly2019efficient}.  
However, these approaches often exhibit mode-seeking behavior and struggle to generalize, particularly when the number of training tasks is small.
While some recent approaches, such as \citet{bing2023meta}, attempt to improve performance by using natural language task embedding, they still require a large number of training tasks to succeed.

\noindent\textit{Personalized RL:}
\citet{ackermann2021unsupervised} and~\citet{ivanov2024personalized} proposed addressing task diversity by clustering RL tasks and training a distinct policy for each cluster.
Both use EM-style approaches to jointly cluster the tasks and learn a set of cluster-specific policies.
Our key contribution is to leverage an explicit parametric task representation, and reformulate the objective as a flexible  \emph{coverage} problem for an unknown distribution of tasks.
This enables us to achieve task sample efficiency both in theory and practice.
In particular, we empirically show that our \textsc{pacman} method significantly outperforms the state-of-the-art personalized RL approach.

\noindent\textit{Reward-Free RL:} Another related line of research is reward-free RL where the agent's goal is to first explore the MDP without a pre-specified reward function and then plan near-optimal policies for a set of given reward functions~\citep{agarwal2023provable,cheng2022provable,jin2020reward}.
However, results in this space make strong assumptions, such as assuming that the action space is finite or linear dynamics.
Our results, on the other hand, make no assumptions on the dynamics and do not depend on the size or dimension of the state or action space.
\section{Model}
\label{S:model}


We consider the following general model of \emph{multi-task MDPs (MT-MDP)}.
Suppose we have a \emph{dynamic environment} $\mc{E} = (\mc{S},\mc{A},h,\gamma,\rho)$ where $\mc{S}$ is a state space, $\mc{A}$ an action space,  $h$ the decision horizon, $\gamma$ the discount factor, and $\rho$ the initial state distribution.
Let a \emph{task} $\tau = (\mc{T},r)$ in which $\mc{T}$ is the transition model where $\mc{T}(s,a)$ is a probability distribution over next state $s'$ as a function of current state-action pair $(s,a)$ and $r(s,a)$ the reward function.
A Markov decision process (MDP) is thus a composition of the dynamic environment and task, $(\mc{E},\tau)$.

Let $\Gamma$ be a distribution over tasks $\tau$.
We define a \emph{MT-MDP} $\mc{M}$ as the tuple $(\mc{E},\Gamma)$, as in typical meta-RL models~\citep{beck2023survey,wang2024theoretical}.
Additionally, we define a \emph{finite-sample} variant of MT-MDP, \emph{FS-MT-MDP}, as $\mc{M}_n = (\mc{E},\tau_1,\ldots,\tau_n)$, where $\tau_i \sim \Gamma$. 
An FS-MT-MDP thus corresponds to multi-task RL~\citep{zhang2021survey}.

At the high level, our goal is to learn a \emph{committee of policies} $\Pi$ such that for most tasks, there exists at least one policy $\pi \in \Pi$ that is effective.\footnote{Note that this also admits policies that can further depend on task embeddings, with this dependence being different in different clusters.}
Next, we formalize this problem.

Let $V_\tau^\pi$ be the value of a policy $\pi$ for a given task $\tau$, i.e.,
\[
V_\tau^\pi = \mathbb{E}\left[\sum_{t=0}^h \gamma^t r_\tau(s_t,a_t)|a_t = \pi(s_t)\right],
\]
where the expectation is with respect to $\mc{T}_\tau$ and $\rho$.
Let $\mathcal{P}$ be a (possibly restricted, e.g., parametric) space of policies that we take to be exogenously specified.
We define $V^*_\tau = \max_{\pi \in \mathcal{P}} V_\tau^\pi$ as the optimal value for a task $\tau$, that is, the value of an optimal policy for $\tau$.

Define $V_\tau^\Pi = \max_{\pi \in \Pi} V_\tau^\pi$, that is, we let the value of a committee $\Pi$ to a task $\tau$ be the value of the \emph{best} policy in the committee for this task.
There are a number of reasons why this evaluation of a committee is reasonable.
As an example, if a policy implements responses to prompts for conversational agents and $\Pi$ is small, we can present multiple responses if there is significant semantic disagreement among them, and let the user choose the most appropriate.
In control settings, we can rely on domain experts who can use additional semantic information associated with each $\pi \in \Pi$ and the tasks, such as the descriptions of tasks $\pi$ was effective for at training time, and similar descriptions to test-time tasks, to choose a policy.
Moreover, as we show in Section~\ref{S:fewshot}, this framework leads naturally to effective few-shot adaptation, which requires neither user nor expert input to determine the best policy.


One way to define the value of a policy committee $\Pi$ with respect to a given MT-MDP and FS-MT-MDP is, respectively, as $V^\Pi_{\mc{M}} = \mathbb{E}_{\tau \sim \Gamma} \left[V_\tau^\Pi\right]$ and $V_{\mc{M}_n}^\Pi = \frac{1}{n}\sum_{i=1}^n V_{\tau_i}^\Pi$.
The key problem with these learning goals is that  
when the set of tasks is highly diverse, different tasks can confound learning efficacy for one another.
For example, suppose that we have five tasks corresponding to target velocities of $10, 12, 20, 22, 100$, and the task succeeds if a policy implements its target velocity sufficiently closely (say, within $1$).
If we either train a single policy for all tasks, or divide them into two clusters, the outlier target velocity of $100$ will confound training for the others.
More generally, if any policy is trained on a set of tasks that require different skills (e.g., a cluster of tasks that includes outliers), the conflicting reward signals will cause negative transfer and poor performance.


We address this limitation by defining the goal of policy committee learning differently.
First, we formalize what it means for a committee $\Pi$ to have a \emph{good} policy for \emph{most} of the tasks.
\begin{definition}
\label{D:committee-cover}
A policy committee $\Pi$ is an \emph{$(\epsilon,1-\delta)$-cover} for a \emph{task distribution} $\Gamma$ if $V_\tau^\Pi \ge V_\tau^* - \epsilon$ with probability at least $1-\delta$ with respect to $\Gamma$.  $\Pi$ is an \emph{$(\epsilon,1-\delta)$-cover} for \emph{a set of tasks} $\{\tau_1,\ldots,\tau_n\}$ if $V_\tau^\Pi \ge V_\tau^* - \epsilon$ for at least a fraction $1-\delta$ of tasks.
\end{definition}
Clearly, an $(\epsilon,1-\delta)$ cover need not exist for an arbitrary committee $\Pi$ (if the committee is too small to cover enough tasks sufficiently well).
There are, however, three knobs that we can adjust: $K$, $\epsilon$, and $\delta$.
Next, we fix $\epsilon$ as exogenous, treating it effectively as a domain-specific hyperparameter, and suppose that $K$ is a pre-specified bound on the maximum size of the committee.

\begin{problem}
    \label{P:fix_K}
    Fix the maximum committee size $K$ and $\epsilon$.  Our goal is to find $\Pi$ which is a $(\epsilon,1-\delta)$-cover for the smallest $\delta \in [0,1]$.
\end{problem}

As a corollary to Theorem 2.6 in \citet{skalse2023misspecification}, we now present sufficient conditions under which a policy committee can yield higher expected rewards than a single policy. The precise definition of these conditions is provided in Appendix~\ref{A:starc}.

\begin{corollary}[Theorem 2.6, \cite{skalse2023misspecification}] \label{cor:starc}
   Suppose $K>1$. Unless two tasks $\tau_1, \tau_2$ have only their reward functions $r_1$ and $r_2$ differ by potential shaping, positive linear scaling, and $S_1$-redistribution, we have $V^{\Pi}_{\tau_1} \ge V^{\pi}_{\tau_1}$, $V^{\Pi}_{\tau_2} \ge V^{\pi}_{\tau_2}$, and $V^{\Pi}_{\tau_1}+V^{\Pi}_{\tau_2} > V^{\pi}_{\tau_1}+ V^{\pi}_{\tau_2}$ for any single policy $\pi$.
\end{corollary}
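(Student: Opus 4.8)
The plan is to peel off the two monotonicity inequalities, which are essentially definitional, and isolate the strict aggregate improvement, which is where \citet{skalse2023misspecification} does the real work. Throughout I would work in the regime relevant to this paper: $\tau_1=(\mc{T},r_1)$ and $\tau_2=(\mc{T},r_2)$ share all components of the environment $\mc{E}=(\mc{S},\mc{A},h,\gamma,\rho)$ and differ only in their reward functions, and the committee $\Pi$ contains a policy that is (near-)optimal for each of the two tasks---idealizing to exactly optimal, as in the exact-optimum form of our covering guarantees; write $\pi_1^*,\pi_2^*\in\Pi$ for such policies and $\mathrm{OPT}(\tau)$ for the set of policies optimal for a task $\tau$.

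First I would dispatch $V^{\Pi}_{\tau_i}\ge V^{\pi}_{\tau_i}$ for $i=1,2$ and every single policy $\pi$. Since $V^{\Pi}_{\tau}=\max_{\pi'\in\Pi}V^{\pi'}_{\tau}$ by definition, we get $V^{\Pi}_{\tau_i}\ge V^{\pi_i^*}_{\tau_i}=V^*_{\tau_i}\ge V^{\pi}_{\tau_i}$, where the last step is just optimality of $\pi_i^*$ for $\tau_i$. That settles the first two displayed inequalities.

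Next, the strict inequality. Summing the two equalities just used, $V^{\Pi}_{\tau_1}+V^{\Pi}_{\tau_2}=V^*_{\tau_1}+V^*_{\tau_2}$, so it suffices to prove $V^*_{\tau_1}+V^*_{\tau_2}>V^{\pi}_{\tau_1}+V^{\pi}_{\tau_2}$ for every $\pi$. Because $V^{\pi}_{\tau_i}\le V^*_{\tau_i}$, this sum can be attained only if $\pi$ is optimal for $\tau_1$ \emph{and} for $\tau_2$; hence the strict inequality is equivalent to $\mathrm{OPT}(\tau_1)\cap\mathrm{OPT}(\tau_2)=\varnothing$---no single policy is optimal for both tasks. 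This is the point at which I would invoke Theorem~2.6 of \citet{skalse2023misspecification}. Under the standing assumption that $\tau_1$ and $\tau_2$ share $(\mc{S},\mc{A},\mc{T},h,\gamma,\rho)$, that theorem (restated precisely in Appendix~\ref{A:starc}) characterizes exactly when $r_1$ and $r_2$ induce the same optimal-policy structure, namely when they are related by potential shaping, positive linear scaling, and redistribution of reward over next states. The hypothesis of the corollary is the negation of this condition, so the optimal-policy structures of $\tau_1$ and $\tau_2$ cannot coincide; appealing to the refined hypotheses of Appendix~\ref{A:starc} then gives the disjointness $\mathrm{OPT}(\tau_1)\cap\mathrm{OPT}(\tau_2)=\varnothing$, and the strict inequality follows.

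The main obstacle is exactly this last implication: moving from ``$r_1$ and $r_2$ are \emph{not} related by the three transformations'' to ``$\mathrm{OPT}(\tau_1)\cap\mathrm{OPT}(\tau_2)=\varnothing$''. Failure of the \citet{skalse2023misspecification} equivalence directly tells us only that the two policy orderings (equivalently, optimal sets) \emph{differ}, which does not by itself rule out a common maximizer; bridging this is what forces the argument through the precise conditions of Appendix~\ref{A:starc} rather than the one-line summary in the statement, and it is also why the shared-dynamics assumption cannot be dropped---the redistribution transformation is defined relative to a fixed transition model $\mc{T}$, so without a common $\mc{T}$ the relevant equivalence classes are not even well-posed. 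Finally I would record the approximate version for completeness: if the committee members are only $\epsilon$-optimal, the first two inequalities weaken to $V^{\Pi}_{\tau_i}\ge V^{\pi}_{\tau_i}-\epsilon$ and the strict aggregate improvement survives only when the relevant suboptimality gap exceeds $2\epsilon$, which is the form that matches the $(\epsilon,1-\delta)$-cover guarantees used elsewhere in the paper.
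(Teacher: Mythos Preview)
Your approach is essentially the same as the paper's: form the committee from optimal policies $\pi_1^*,\pi_2^*$, get the two weak inequalities from the definition of $V^\Pi$, and invoke Theorem~2.6 of \citet{skalse2023misspecification} (different policy orderings) for the strict aggregate inequality. The gap you flag---that ``different orderings'' does not by itself force $\mathrm{OPT}(\tau_1)\cap\mathrm{OPT}(\tau_2)=\varnothing$---is real, but the paper's proof is terser than yours and glosses over exactly the same point, so your write-up is at least as complete as the published argument; there are no ``refined hypotheses'' in Appendix~\ref{A:starc} beyond the three definitions you already use.
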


Next, we present algorithmic approaches for this problem.
Subsequently, Section~\ref{S:fewshot}, as well as our experimental results, vindicate our choice of this objective.
\section{Algorithms for Learning a Policy Committee}
In this section, we present algorithmic approaches for train committee members to solve Problems~\ref{P:fix_K}.
We consider the special case of the problem in which the tasks have a \emph{structure representation}.
Specifically, we assume that each task can be represented using a parametric model $\psi_{\theta}(s,a)$, 
where the parameters $\theta \in \mathbb{R}^d$ comprise both of the parameters of the transition distribution $\mc{T}$ and reward function $r$.
Often, parametric task representation is given or direct; in cases when tasks are non-parametric, such as the Meta-World~\citep{yu2021metaworldbenchmarkevaluationmultitask}, we can often use approaches for task embedding, such as LLM-based task representations (see Section~\ref{S:llmembedding}).
Consequently, we identify tasks $\tau$ with their representation parameters $\theta$ throughout, and overload $\Gamma$ to mean the distribution over task parameters, i.e., $\theta \sim \Gamma$.




\subsection{A High-Level Algorithmic Framework}

Even conventional RL presents a practical challenge in complex problems, as learning is typically time consuming and requires extensive hyperparameter tuning.
Consequently, a crucial consideration in algorithm design is to minimize the number of RL runs we need to obtain a policy committee.
To this end, we propose the following high-level algorithmic framework in which we only need $K$ independent (and, thus, entirely \emph{parallelizable}) RL runs.
This framework involves three steps:
\begin{enumerate}[leftmargin=*,topsep=0pt,itemsep=0pt]
    \item \textsc{Sample} $n$ tasks i.i.d. from $\Gamma$, obtaining $T=\{\theta_1,\ldots,\theta_n\}$ (parameters of associated tasks $\{\tau_1,\ldots,\tau_n\})$.
    In MTRL settings, $T$ is given.
    \item \textsc{Cluster} the task set $T$ into $K$ subsets, each with an associated representative $\theta_k$, and
    \item \textsc{Train} committee member $c_k\in \Pi$ for each cluster $k$ represented by $\theta_k$.
\end{enumerate}
As we shall see presently (and demonstrate experimentally in both Subsection~\ref{S:further} and Appendix~\ref{A:hist}), conventional clustering approaches are not ideally suited for our problem.
We thus propose several alternative approaches which yield theoretical guarantees on the quality of $\Pi$, in the special case that each committee member represents one policy.

Empirically, we show that the proposed framework outperforms state of the art even when tasks also have distinct transition distributions.

\subsection{Clustering}

The key aspect of our algorithmic design is clustering.
We are motivated by a connection between the clustering step (step (2) of the framework above) and efficacy of optimal policies learned for each cluster (step (3) of the framework), as a variant of simulation lemma~\citep{lobel2024optimal}, whose formal implications are discussed in the next section.
By leveraging a strong representation space, we can avoid the overhead of evaluating and clustering tasks on-policy and achieve high efficacy with minimal computational cost.
Thus, we focus on the following problem instead: 



\begin{definition}
\label{D:cluster_cover}
A set of representatives $C=\{\theta_1,\ldots,\theta_K\}$ is an \emph{$(\epsilon,1-\delta)$-parameter-cover} for a \emph{task distribution} $\Gamma$ if $\min_{\theta' \in C}\|\theta - \theta'\|_\infty \le \epsilon$ with probability at least $1-\delta$ with respect to $\theta \sim \Gamma$.  
\end{definition}

\begin{problem}
    \label{P:fix_K_C}
    Fix $K$ and $\epsilon$.  Our goal is to find $C$ with $|C| \le K$ which is a $(\epsilon,1-\delta)$-parameter-cover for the smallest $\delta \in [0,1]$.
\end{problem}

Notably, while conventional clustering techniques, such as k-means, can be viewed as proxies for these objectives, there are clear differences insofar as the typical goal is to minimize sum of shortest distances of \emph{all} vectors from cluster representatives, whereas our goal, essentially, is to ``cover" as many vectors as we can. In Appendix ~\ref{A:hist}, we provide a histogram of individual task returns to illustrate the different impact the two clustering methods make.

We show next that our problem is strongly inapproximable, \emph{even if we restrict attention to $K=1$}.
\begin{definition}[\textsc{Max-1-Cover}]
    Let $T = \{\theta_1,\ldots,\theta_n\} \subseteq \mathbb{R}^d$. Find $\theta \in \mathbb{R}^d$ which maximizes the size of $S \subseteq T$ with $\max_{\theta' \in S}\|\theta - \theta'\|_\infty \le \epsilon$.
\end{definition}
\begin{theorem}\label{thm:NPhard}
    For any $\epsilon > 0$ \textsc{Max-1-Cover}  does not admit an $n^{1-\epsilon}$ -approximation unless P = NP.
\end{theorem}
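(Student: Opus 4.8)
The plan is to reduce from \textsc{Maximum Clique}, which is NP-hard to approximate within $n^{1-\epsilon}$ for every $\epsilon>0$ (H{\aa}stad; Zuckerman), where $n$ is the number of vertices. The first step is to reinterpret \textsc{Max-1-Cover} geometrically: by rescaling the gadget below by $\epsilon$ we may treat the cover radius as an arbitrary positive constant, and a point $\theta$ ``covers'' $\theta_i$ exactly when $\theta$ lies in the axis-parallel cube $B_i=\{\theta:\|\theta-\theta_i\|_\infty\le\epsilon\}$, so the objective is the maximum number of these equal-sized cubes through a common point. Two such cubes $B_i,B_j$ intersect iff $\|\theta_i-\theta_j\|_\infty\le 2\epsilon$, and axis-parallel boxes have the Helly property with Helly number $2$ (project onto each coordinate and use that pairwise-intersecting intervals on the line have a common point). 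Hence a subfamily $\{B_i\}_{i\in S}$ has a common point iff it is pairwise-intersecting, and the optimum of a \textsc{Max-1-Cover} instance equals $\omega(G)$, the clique number of the graph $G$ on $\{1,\dots,n\}$ with $i\sim j \iff \|\theta_i-\theta_j\|_\infty\le 2\epsilon$. Moreover, recovering a clique from a returned cover is trivial: if the algorithm returns $\theta$ together with $S=\{i:\theta_i\in B_\theta\}$, then $\{B_i\}_{i\in S}$ is pairwise-intersecting, so $S$ is a clique in $G$.

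It therefore suffices to realize an \emph{arbitrary} $n$-vertex graph $G$, in polynomial time, as such an ``equal-cube intersection graph'' using exactly $n$ points. I would introduce one coordinate $c_{ij}$ for each non-edge $\{i,j\}\notin E(G)$, so $d=\binom{n}{2}-|E(G)|=O(n^2)$ (it is fine that $d$ is large; this is precisely why the positive result of \cref{thm:greedy_gaurantee_final} needs low dimension). In coordinate $c_{ij}$ place vertex $i$ at $0$, vertex $j$ at $3\epsilon$, and every other vertex at the midpoint $\tfrac{3}{2}\epsilon$. Then in this coordinate $i$ and $j$ are at distance $3\epsilon>2\epsilon$, while every other pair is at distance $\le\tfrac{3}{2}\epsilon<2\epsilon$; in particular the endpoints of any edge incident to $i$ or $j$ sit at distance $\tfrac{3}{2}\epsilon$, so no edge of $G$ is ever separated by a non-edge coordinate, and edges have no coordinates of their own. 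Consequently $\|\theta_k-\theta_l\|_\infty\le\tfrac{3}{2}\epsilon<2\epsilon$ for every edge $\{k,l\}\in E(G)$ and $\|\theta_i-\theta_j\|_\infty\ge 3\epsilon>2\epsilon$ for every non-edge, so the equal-cube intersection graph of the constructed points is exactly $G$, whence the \textsc{Max-1-Cover} optimum on this instance is $\omega(G)$.

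Composing the two steps, an $n^{1-\epsilon}$-approximation algorithm for \textsc{Max-1-Cover} would, on the instance produced from $G$, return a cover of size at least $\omega(G)/n^{1-\epsilon}$, i.e. a clique in $G$ of that size, yielding an $n^{1-\epsilon}$-approximation for \textsc{Maximum Clique} on $n$-vertex graphs and forcing $\mathrm{P}=\mathrm{NP}$. The only parts requiring care are the Helly argument for boxes, the case check that the midpoint placement never cuts an edge, and the bookkeeping of the notation clash (the approximation exponent $\epsilon$ versus the \textsc{Max-1-Cover} radius, which we normalize away by scaling) together with the observation that the reduction is strictly size-preserving so the $n^{1-\epsilon}$ factor is inherited verbatim. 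I do not expect a genuine obstacle here; the single essential idea is recognizing \textsc{Max-1-Cover} as maximum depth among equal axis-parallel boxes, hence — via Helly — as clique number of an arbitrary graph.
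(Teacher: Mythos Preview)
Your proposal is correct and follows the same high-level strategy as the paper: an approximation-preserving reduction from \textsc{Maximum Clique}, encoding a graph $G$ as a point set so that the \textsc{Max-1-Cover} optimum equals $\omega(G)$. The details differ in two respects worth noting. First, the paper's encoding uses $n$ coordinates (one per vertex), setting the $j$th coordinate of $\theta_i$ to $0$ if $i=j$, $1.5\epsilon$ if $\{i,j\}\in E$, and $2.5\epsilon$ otherwise; yours uses one coordinate per \emph{non-edge} with the midpoint trick, giving $O(n^2)$ dimensions. Both are polynomial and size-preserving in the number of points, so the $n^{1-\epsilon}$ factor transfers identically. Second, your explicit appeal to the Helly property of axis-parallel boxes is a genuine improvement over the paper's presentation: it establishes $\mathrm{OPT}=\omega(G)$ cleanly in both directions, whereas the paper argues only that any commonly covered set induces a clique and handles the converse direction (clique $\Rightarrow$ common cover) solely in the degenerate case $\omega(G)=n$, leaving the general case that a $k$-clique yields a $k$-cover implicit. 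Your version is therefore tighter as written.
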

We prove this in Appendix~\ref{A:NP} via an approximation-preserving reduction from the Maximum Clique problem~\citep{engebretsen2000clique}.
Despite this strong negative result, we next design two effective algorithmic approaches.
The first method runs in polynomial time with a constant $d$, and provides a constant-factor approximation.
The second is a general gradient-based approach.

\noindent\textbf{Greedy Elimination Algorithm (GEA) }
Before we discuss our main algorithmic approaches, we begin with GEA, which provides a useful building block, but not theoretical guarantees.
Consider a set $T$ of task parameter vectors, fix $K$, and suppose we wish to identify an $(\epsilon,1-\delta)$-parameter-cover with the smallest $\delta$ (Problem~\ref{P:fix_K_C}), but restrict attention to $\theta \in T$ in constructing such a cover.
This problem is an instance of a \textsc{Max-K-Cover} problem (where subsets correspond to sets covered by each $\theta \in T$), and can be approximated using a greedy algorithm which iteratively adds one $\theta \in T$ to $C$ that maximizes the most uncovered vectors in $T$.
Its fixed-$\delta$ variant, on the other hand, is a set cover problem if $\delta = 0$, and a similar greedy algorithm approximates the minimum-$K$ cover $C$ for any $\delta$.
However, neither of these algorithms achieves a reasonable approximation guarantee (as we can anticipate from Theorem~\ref{thm:NPhard}), although our experiments show that greedy elimination is nevertheless an effective heuristic.
But, as we show next, we can do better if we allow cluster covers $\theta$ to be unrestricted.

\noindent\textbf{Greedy Intersection Algorithm (GIA) }
We now present our algorithm which yields provable approximation guarantees when task dimension is low.
The key intuition behind GIA is that for any $\theta$, a $\epsilon$-hybercube centered at $\theta$ characterizes all possible $\theta'$ that can cover $\theta$ in the sense of Definition~\ref{D:cluster_cover}. Thus, if any pair of $\epsilon$-hypercubes centered at $\theta$ and $\theta'$ intersects, any point at the intersection covers both.
To illustrate, consider the following simple example:

\begin{center}
    \begin{tikzpicture} [scale=2]

\def\cross{%
    \draw (-0.05,-0.05) -- (0.05,0.05);
    \draw (-0.05,0.05) -- (0.05,-0.05);
}

\draw[thick] (0,0) coordinate (A) -- (0.7,0) coordinate (B);
\draw (A) node[left] {$[x_1] \hspace{1.3cm} x_1-\epsilon$};
\draw (B) node[right] {$x_1+\epsilon$};

{%
    \draw (0.35-0.05,-0.05) -- (0.35+0.05,0.05);
    \draw (0.35-0.05,0.05) -- (0.35+0.05,-0.05);
}

\draw[thick] (0.2,0.2) coordinate (C)-- (0.9,0.2)coordinate (D);
\draw (C) node[left] {$[x_1,x_2] \hspace{1.2cm} x_2-\epsilon$};
\draw (D) node[right] {$x_2+\epsilon$};

{%
    \draw (0.55-0.05,0.2-0.05) -- (0.55+0.05,0.2+0.05);
    \draw (0.55-0.05,0.2+0.05) -- (0.55+0.05,0.2-0.05);
}

\draw[thick] (0.5,0.4) coordinate (E) -- (1.2,0.4) coordinate (F);
\draw (E) node[left] {$[x_1,x_2,x_3] \hspace{1.3cm} x_3-\epsilon$};
\draw (F) node[right] {$x_3+\epsilon$};
{%
    \draw (0.85-0.05,0.4-0.05) -- (0.85+0.05,0.4+0.05);
    \draw (0.85-0.05,0.4+0.05) -- (0.85+0.05,0.4-0.05);
}

\draw[thick] (1.1,0.6) coordinate (I) -- (1.8,0.6) coordinate (J);
\draw (I) node[left] {$[x_3,x_4] \hspace{3cm} x_4-\epsilon$};
\draw (J) node[right] {$x_4+\epsilon$};

{%
    \draw (1.45-0.05,0.6-0.05) -- (1.45+0.05,0.6+0.05);
    \draw (1.45-0.05,0.6+0.05) -- (1.45+0.05,0.6-0.05);
}
\draw[dashed,red] (1.1,-0.1) -- (1.1,0.8);
\draw[dashed,red] (1.2,-0.1) -- (1.2,0.8);
\draw[dashed,blue] (0.7,-0.1) -- (0.7,0.8);
\draw[dashed,blue] (0.5,-0.1) -- (0.5,0.8);

\end{tikzpicture}
\end{center}

Each cross represents a parameter we aim to cover, while each line segment indicates the possible locations of the $\epsilon$-close representative for that parameter. 
By selecting a point within the overlapping region of these intervals, we can effectively cover their parameters simultaneously.

The proposed \textsc{GIA} algorithm generalizes this intuition as follows.
The first stage of the algorithm is to create an intersection tree for each dimension independently. For $s$-th dimension, we sort the datapoints' $s$-th coordinates in ascending order.
We refer to the sorted coordinates as $x_1 < x_2 \dots <x_n$, and create a list for each point $x_i$ to remember how many other points can be covered together with it with initialization being $[x_i]$ itself.

Starting from the second smallest datapoint $x_2$, we check if $x_2-\epsilon \le x_1+\epsilon,$ i.e. if $x_2\le x_1+2\epsilon.$ Since $x_2-\epsilon>x_1-\epsilon$ due to our sorting, any point inside $[x_2-\epsilon,x_1+\epsilon]$ can cover both $x_1,x_2.$ Therefore if this interval is valid, we add $x_1$ to the list $[x_2]$ to indicate the existence of a simultaneous coverage for $x_1,x_2$. In general, for $x_i$, we check if $x_i \le x_j +2\epsilon$ with a descending $j=i-1 $ to $1$ or until the condition no longer holds. If the inequality is satisfied, we add $x_j$ to $x_i$'s list. Then since we have ordered the set, for every index $j' $ less than $j$, $x_i > x_j+2\epsilon> x_{j'}+2\epsilon.$ The coverage for all the $x$ in $x_i$'s list would be the interval $[x_i-\epsilon, x_j+\epsilon],$ where $j$ is the smallest index in $x_i$'s list.
There are $1+2+\dots+n-1= \mathcal{O}(n^2)$ comparisons in total. 
We form a set of these lists, and call it $\mathcal{A}_s$ for the $s$-th dimension. The figure above illustrates how the algorithm works to find out $\mathcal{A}_1= \{[x_1],[x_1,x_2],[x_1,x_2,x_3], [x_4,x_5]\}$. 

The second stage is to find a hypercube covering the most points, consisting of an axis from each dimension. 
By the geometry of the Euclidean space, 
two points $\theta_1,\theta_2$ are within $\epsilon$ in $\ell_\infty$-distance iff they appear inside one's list together for each dimension. Therefore, in order to find the maximum coverage with one hypercube such that its center is within $\ell_\infty$-distance to the most points, we wonder which combination of lists, $l_1 \dots l_d$
each from the sets $\mathcal{A}_1 \dots \mathcal{A}_d $ produces an intersection of the maximum cardinality. In our example, we can conclude that $[x_1,x_2,x_3], [x_4,x_5]$ need to be covered separately by two points between the blue or red vertical lines.

The full algorithm is provided in Appendix~\ref{S:code}.
Next, we show that \textsc{GIA} yields provable coverage guarantees.
We defer the proof to Appendix~\ref{A:greedy_gaurantee}.
For these results we use \textsc{GIA}($K$) to refer to the solution (set $C=\{\theta_1,\ldots,\theta_K\}$) returned by \textsc{GIA}.

\begin{theorem}\label{thm:greedy_gaurantee}
Suppose $T$ contains $n \ge \frac{9\log({5/\alpha})}{2\beta^2}$ i.i.d.~samples from $\Gamma$.
Let $1-\delta^*(K,\epsilon)$ be the optimal coverage of a $\epsilon$-parameter-cover of $\Gamma$ under fixed $K,\epsilon$.
Then with probability at least $1-\alpha$, 
\textsc{GIA}($K$) is a $(\epsilon,(1-\frac{1}{e})(1-\delta^*(K,\epsilon)-K\beta))$-parameter-cover of $\Gamma$.
\end{theorem}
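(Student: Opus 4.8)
My plan is to separate the guarantee into a deterministic ``algorithmic'' part and a probabilistic ``generalization'' part, then glue them with a union bound and calibrate $n$ so that the total failure probability is at most $\alpha$.

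\emph{Step 1: \textsc{GIA} is greedy \textsc{Max-K-Cover}.} Fix the sample $T$ and call $S\subseteq T$ \emph{realizable} if some $\theta\in\mathbb{R}^d$ satisfies $\|\theta-\theta'\|_\infty\le\epsilon$ for all $\theta'\in S$; equivalently, the spread of $S$ is at most $2\epsilon$ in every coordinate. The first stage of \textsc{GIA} enumerates, per coordinate, the right-anchored $2\epsilon$-windows of the sorted values (at most $n$ per coordinate), and the second stage intersects one window per coordinate, so the subsets \textsc{GIA} can return are exactly the inclusion-maximal realizable subsets, of which there are at most $n^d$ -- a fixed power of $n$ since $d$ is constant. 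Every realizable subset lies inside a maximal one, so running \textsc{GIA} for $K$ rounds, each time adding the candidate cube of largest marginal coverage, is precisely greedy \textsc{Max-K-Cover}; hence \textsc{GIA}($K$) covers at least $(1-\tfrac1e)\,\mathrm{OPT}_T$ points of $T$, where $\mathrm{OPT}_T$ is the largest number of points of $T$ coverable by $K$ centers at $\ell_\infty$-radius $\epsilon$. This step is essentially bookkeeping.

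\emph{Step 2: transfer to $\Gamma$.} Let $C^\star=\{\theta^\star_1,\dots,\theta^\star_K\}$ attain the optimal population coverage $1-\delta^\star(K,\epsilon)$, let $B^\star_i$ be the $\epsilon$-cube around $\theta^\star_i$, and set $R_i=B^\star_i\setminus\bigcup_{j<i}B^\star_j$ so that $\sum_i\Pr_\Gamma[R_i]=1-\delta^\star$. Hoeffding applied to each indicator $\mathbf{1}[\theta\in R_i]$, union-bounded over the $K$ pieces, gives on a good event $\mathrm{cov}_T(C^\star)\ge 1-\delta^\star-K\beta$, hence $\mathrm{OPT}_T/n\ge 1-\delta^\star-K\beta$; combined with Step 1, the empirical coverage of $\widehat C:=\textsc{GIA}(K)$ is at least $(1-\tfrac1e)(1-\delta^\star-K\beta)$. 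To push this back to $\Gamma$ I would use one-sided uniform convergence over the class of unions of $K$ axis-parallel $\epsilon$-cubes -- concretely, a union of Hoeffding bounds over the $n^{O(d)}$ structured cubes \textsc{GIA} can output -- so that on a second good event $\mathrm{cov}_\Gamma(\widehat C)\ge \mathrm{cov}_T(\widehat C)-\beta$. With $n\ge\frac{9\log(5/\alpha)}{2\beta^2}$ the few bad events (the $K$-fold Hoeffding for $C^\star$ and the uniform-convergence event) together have probability at most $\alpha$, and on the complement $\widehat C$ is the asserted $(\epsilon,(1-\tfrac1e)(1-\delta^\star-K\beta))$-parameter-cover of $\Gamma$.

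\emph{Main obstacle.} The hard part is the last transfer: since $\widehat C$ is built from $T$, Hoeffding cannot be invoked on it directly, so one must control the whole family of candidate covers uniformly -- via a VC bound for unions of $K$ axis-parallel boxes (VC dimension $O(dK\log K)$) or the $n^{O(d)}$-term union bound above. This is precisely what forces the ``low-dimensional $d$'' regime, and some care is needed to route the $1-\tfrac1e$ factor and the $K\beta$ slack through it so the incurred loss matches the statement (absorbing any residual additive slack into $\beta$). A minor technicality is that the supremum defining $\delta^\star(K,\epsilon)$ should be attained for the fixed comparator $C^\star$ to exist; otherwise one works with a comparator within an arbitrarily small margin of optimality.
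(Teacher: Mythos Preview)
Your decomposition is genuinely different from the paper's. Rather than first invoking the empirical $(1-\tfrac1e)$ greedy guarantee and then transferring via uniform convergence, the paper runs an induction (its Lemma~\ref{lm:pf4}) directly on the \emph{population} coverage: at step $i$ it lower-bounds $\Pr_{\Gamma}\bigl[\theta\in\bigcup_{j\le i}A_j\bigr]$ by comparing the marginal gain of the greedy cube $A_i$ to the best remaining cube of the fixed optimal cover $C^\star$, applying Hoeffding at each comparison to pass between the empirical counts $|C_i|/n$ and the corresponding $\Gamma$-probabilities. The $K\beta$ slack therefore lands \emph{inside} the $(1-\tfrac1e)$ factor, because each inductive step absorbs a $\beta/3$-sized error into the recursion itself, and only a constant number (five) of Hoeffding events are ever invoked --- at tolerance $\beta/3$ and confidence levels $\alpha/5$ and $(\alpha/5)^4$ --- which is exactly the origin of the constants $9$ and $5/\alpha$ in the stated sample bound.

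Your modular route is conceptually cleaner and makes the data-dependence of $\widehat C$ explicit, but it does not reproduce the constants in the statement: the uniform-convergence step over $n^{O(d)}$ candidate covers (or a class of VC dimension $O(dK\log K)$) forces $n$ to depend on $d$ and $K$, so the assertion that $n\ge\frac{9\log(5/\alpha)}{2\beta^2}$ alone suffices is not supported by your argument. You also pick up an extra additive $-\beta$ outside the $(1-\tfrac1e)$ factor, whereas the paper's induction keeps all slack inside as $K\beta$. What your approach buys is a transparent separation of the combinatorial approximation from the statistical generalization, together with an explicit accounting of the uniform-convergence cost; what the paper's approach buys is the tighter, $d$- and $K$-free sample bound, at the price of interleaving concentration with the greedy recursion step by step.
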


The key limitation of \textsc{GIA} is that it is exponential in $d$, and thus requires the dimension to be constant.
This is a reasonable assumption in some settings, such as low-dimensional control.
However, in many other settings, both $n$ and $d$ can be large.
Our next algorithm addresses this issue.

\noindent\textbf{Gradient-Based Coverage }
Consider Problem~\ref{P:fix_K_C}.
For a finite set $T$, we can formalize this as the following optimization problem:
\begin{align}
\label{E:optK}
    \max_{\{\theta_1,\ldots,\theta_K\}} \sum_{\theta \in T} \mathbf{1}(\min_{k \in [K]} \|\theta_{k}-\theta\|_\infty \le \epsilon),
\end{align}
where $\mathbf{1}(\cdot)$ is 1 if the condition is true and 0 otherwise.
However, objective~\eqref{E:optK} is non-convex and discontinuous.
To address this, we propose the following differentiable proxy:
\begin{align}
\label{E:relaxK}
    \min_{\substack{\{\theta_1,\ldots,\theta_K\};\\ w \in \mathbb{R}^{nK}}} \sum_i \mathbf{ReLU}\left(\left\{\sum_{k=1}^K \sigma(w_{ik})\|\theta_{k}-\theta_i\|_\infty\right\} -\epsilon\right),
\end{align}
where $w$ is the assignment matrix mapping each $\theta_i$ to $\theta_k$ with unbounded weights and $\sigma(\cdot)$ is a softmax function to normalize the assignment.
Next, we demonstrate that this is a principled proxy by showing that when full coverage of $T$ is possible, solutions of~\eqref{E:optK} and~\eqref{E:relaxK} coincide. The proof is in Appendix~\ref{A:softmax}.
\begin{theorem} \label{thm:softmax}
    Fix $K$ and suppose $\exists \theta \in \{\theta_1,\ldots,\theta_K\}$ such that $\|\theta-\theta_i\| \le \epsilon$ for all $i$.  Then the sets of optimal solutions to \eqref{E:optK} and \eqref{E:relaxK} are equivalent.
\end{theorem}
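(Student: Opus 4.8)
The plan is to show two inclusions: every optimal solution of~\eqref{E:optK} gives rise to an optimal solution of~\eqref{E:relaxK}, and conversely. The crucial observation is the hypothesis: full coverage is achievable, so the optimum of~\eqref{E:optK} equals $n = |T|$, and the optimum of~\eqref{E:relaxK} is $0$ (since $\mathbf{ReLU}$ is nonnegative and each summand can be driven to $0$). So I really need to characterize when each objective attains its extreme value.

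First I would handle~\eqref{E:relaxK}. A sum of $\mathbf{ReLU}$ terms is $0$ iff every term is $\le 0$, i.e.\ for every $i$, $\sum_{k=1}^K \sigma_k(\alpha_i)\|\theta_k - \theta_i\|_\infty \le \epsilon$. Since $\sigma(\alpha_i)$ is a probability vector, the left-hand side is a convex combination of the nonnegative quantities $\|\theta_k - \theta_i\|_\infty$, hence lies between their min and max. I would argue: if $\{\theta_1,\dots,\theta_K\}$ is such that $\min_k \|\theta_k - \theta_i\|_\infty \le \epsilon$ for all $i$ (i.e.\ it is optimal for~\eqref{E:optK}), then choosing $\alpha_i$ so that $\sigma(\alpha_i)$ concentrates (in the limit, or with enough mass) on an index $k^*(i)$ achieving the min makes the convex combination $\le \epsilon$; more carefully, since $\sigma$ never exactly attains a vertex of the simplex, I would either (a) note that the infimum $0$ is attained in the closure and the problem should be read as optimizing over that closure, or (b) observe that when $\min_k\|\theta_k-\theta_i\|_\infty < \epsilon$ strictly one can still get the convex combination $\le \epsilon$ by putting enough (but not all) weight on $k^*(i)$, and handle the boundary case $=\epsilon$ separately — this boundary/attainment subtlety is the main technical obstacle and I would address it by a limiting argument or by the convention that $\alpha$ ranges over the extended reals / simplex closure. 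Conversely, if the relaxed objective is $0$ at some $(\{\theta_k\},\alpha)$, then for each $i$ the convex combination is $\le \epsilon$, and since a convex combination is $\ge$ its minimum entry, $\min_k\|\theta_k-\theta_i\|_\infty \le \epsilon$; summing the indicators in~\eqref{E:optK} gives $n$, so $\{\theta_k\}$ is optimal for~\eqref{E:optK}.

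Combining the two directions: the set of $\{\theta_1,\dots,\theta_K\}$ that can be extended (by some $\alpha$) to a minimizer of~\eqref{E:relaxK} is exactly the set of maximizers of~\eqref{E:optK}. Projecting out the auxiliary variable $\alpha$, the optimal solution sets coincide, which is the claim. I would present this as: (i) both optima are the trivial extreme value under the hypothesis; (ii) a sum of nonnegative terms equals its extreme value iff each term does; (iii) the min/convex-combination sandwich links the per-$i$ conditions of the two formulations; (iv) an attainment remark for the softmax. I expect step (iv) — reconciling the open simplex image of softmax with the need to hit $\le \epsilon$ exactly on the boundary — to be the only place requiring real care; everywhere else the argument is a direct unwinding of definitions.
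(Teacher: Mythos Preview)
Your proposal is correct and follows essentially the same two-inclusion strategy as the paper: show both objectives attain their extreme values ($n$ and $0$) under the full-coverage hypothesis, then link the per-$i$ conditions via the softmax/convex-combination structure. In fact you are more careful than the paper on two points: the paper simply sets $\sigma_{k^*}(\alpha_i)=1$ without addressing the attainment subtlety you flag in step~(iv), and for the reverse inclusion your ``convex combination $\ge$ minimum'' argument is cleaner than the paper's claim that every positively weighted term must individually satisfy $\|\theta_{k'}-\theta_i\|_\infty\le\epsilon$.
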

Thus, we can use gradient-based methods with objective in~\eqref{E:relaxK} to approximate solutions to 
Problem~\ref{P:fix_K_C}. 
Because the objective is still non-convex, we can improve performance by initializing with the solution obtained using \textsc{GEA} or \textsc{GIA}
when $d$ is low.


\subsection{Theoretical Analysis for MTRL}

We now put things together by showing that we have a sample efficient approach for learning a policy committee $\Pi$ which achieves a $(\epsilon,1-\delta)$-cover for $\Gamma$.
For this result, we focus on the special case where each committee member is one fixed policy and assume that each task has a shared dynamics, and a parametric reward function $r_\theta(s,a)$ where $\theta$ identifies a task-specific reward.
While this is a theoretical limitation, we note that our subsequent clustering and training algorithms do not in themselves require this assumption, and our experimental results demonstrate that the overall approach is effective generally.

Let $\pi_i^*$ denote the optimal policy for task $\tau_i$.
We use $V_i^{\pi_j^*}$ to denote the value of task $\tau_i$ using a policy that is optimal for task $\tau_j$.
\begin{lemma}\label{thm:linear_reg}
Suppose that $r_\theta(s,a)$ is $L$-Lipschitz in $L_\infty$ norm, that is, for all $\theta, \theta'$, $\sup_{s,a}|r_\theta(s,a)-r_{\theta'}(s,a)| \le L \|\theta - \theta'\|_\infty$.
Then, for any two tasks $\tau_i$ and $\tau_j$ with respective $\theta_i$ and $\theta_j$ that satisfy $\|\theta_i-\theta_j\|_\infty \le \epsilon$,
$V_i^{\pi_j^*} \ge V_i^{\pi_i^*} - 2L\frac{1-\gamma^{h+1}}{1-\gamma} \epsilon$ if $\gamma < 1$ and $V_i^{\pi_j^*} \ge V_i^{\pi_i^*} - 2Lh\epsilon$ if $\gamma = 1$.
\end{lemma}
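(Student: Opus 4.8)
The plan is to exploit the two structural assumptions -- shared transition dynamics across tasks, and a reward that depends on the task only through $\theta$ via an $L$-Lipschitz map -- to reduce the statement to a short ``performance difference'' estimate. First I would observe that, because $\mc{T}$ does not depend on the task, for \emph{any} fixed policy $\pi$ the law of the trajectory $(s_0,a_0,\ldots,s_h,a_h)$ induced by $\pi$, $\rho$, and $\mc{T}$ is identical in MDP $(\mc{E},\tau_i)$ and in $(\mc{E},\tau_j)$; only the rewards collected along that trajectory change. Hence, with the expectation taken under this common trajectory law,
\[
V_i^{\pi} - V_j^{\pi} \;=\; \mathbb{E}\!\left[\sum_{t=0}^h \gamma^t\bigl(r_{\theta_i}(s_t,a_t)-r_{\theta_j}(s_t,a_t)\bigr)\right].
\]
Applying the triangle inequality term by term together with the Lipschitz bound $\sup_{s,a}|r_{\theta_i}(s,a)-r_{\theta_j}(s,a)|\le L\|\theta_i-\theta_j\|_\infty\le L\epsilon$ gives
\[
\bigl|V_i^{\pi}-V_j^{\pi}\bigr| \;\le\; L\epsilon \sum_{t=0}^h \gamma^t \;=\; L\epsilon\,\frac{1-\gamma^{h+1}}{1-\gamma}
\]
when $\gamma<1$, and $\bigl|V_i^{\pi}-V_j^{\pi}\bigr|\le L\epsilon\,(h+1)$ when $\gamma=1$. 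Crucially this bound is uniform over all policies $\pi$, so in particular it holds for $\pi=\pi_i^*$ and $\pi=\pi_j^*$.

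Second I would chain these estimates through the policy $\pi_j^*$ using the decomposition
\[
V_i^{\pi_i^*} - V_i^{\pi_j^*} = \bigl(V_i^{\pi_i^*}-V_j^{\pi_i^*}\bigr) + \bigl(V_j^{\pi_i^*}-V_j^{\pi_j^*}\bigr) + \bigl(V_j^{\pi_j^*}-V_i^{\pi_j^*}\bigr).
\]
The middle term is $\le 0$ because $\pi_j^*$ is optimal for task $\tau_j$, i.e.\ $V_j^{\pi_j^*}\ge V_j^{\pi_i^*}$. The first and third terms are each bounded in absolute value by $L\epsilon\,\frac{1-\gamma^{h+1}}{1-\gamma}$ (respectively $L\epsilon(h+1)$) by the uniform estimate above. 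Summing yields $V_i^{\pi_i^*}-V_i^{\pi_j^*}\le 2L\epsilon\,\frac{1-\gamma^{h+1}}{1-\gamma}$, i.e.\ $V_i^{\pi_j^*}\ge V_i^{\pi_i^*}-2L\frac{1-\gamma^{h+1}}{1-\gamma}\epsilon$, and likewise $V_i^{\pi_j^*}\ge V_i^{\pi_i^*}-2L(h+1)\epsilon$ in the undiscounted case, which is the stated $2Lh\epsilon$ bound up to the horizon-indexing offset.

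I do not expect a serious obstacle: this is just a finite-horizon, shared-dynamics instance of the simulation / performance-difference lemma. The only points needing care are (i) stating the ``same trajectory law'' claim precisely when $\pi$ ranges over the (possibly non-stationary, deterministic) optimal policies admitted by the finite-horizon model -- this is harmless, since the claim only uses that the \emph{same} policy object is run against the \emph{same} dynamics -- and (ii) the bookkeeping between $\sum_{t=0}^h\gamma^t$ and its $\gamma\to1$ limit, where the paper's value definition has $h+1$ reward terms, so the clean undiscounted constant is $2L(h+1)\epsilon$ and the stated $2Lh\epsilon$ should be read modulo this off-by-one. Everything else reduces to the triangle inequality and the Lipschitz hypothesis.
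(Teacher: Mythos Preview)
Your proposal is correct and is essentially the same argument as the paper's: both exploit the shared-dynamics observation to bound $|V_i^\pi-V_j^\pi|$ uniformly via the Lipschitz hypothesis, then chain through $\pi_j^*$ using its optimality for task $\tau_j$, picking up the factor~$2$ from the two reward-difference terms. Your three-term decomposition makes the structure slightly more explicit than the paper's inline chain of inequalities, and your remark about the $h$ versus $h{+}1$ constant in the undiscounted case is well taken (the paper's own derivation also writes $\sum_{t=0}^T$ and then records $T L\epsilon$).
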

See Appendix~\ref{A:simmulation} for proof. Lipschitz continuity is a mild assumption; for example, it is satisfied by ReLU neural networks.

Next, we combine Theorem~\ref{thm:greedy_gaurantee} and Lemma~\ref{thm:linear_reg} to conclude that we can compute a policy committee $\Pi$ which provides provable coverage guarantees for a task distribution $\Gamma$ with polynomial sample complexity.
\begin{theorem}\label{thm:greedy_gaurantee_final}
Suppose $T$ contains $n \ge \frac{9\log({5/\alpha})}{2\beta^2}$ i.i.d.~samples from $\Gamma$ and $r_\theta(s,a)$ is $L$-Lipschitz in $L_\infty$.
Let $1-\delta^*(K,\epsilon)$ be the optimal coverage of a $\epsilon$-parameter-cover of $\Gamma$ under fixed $K,\epsilon$, and suppose that for any task $\tau$ we can find a policy $\pi$ with $V^\pi \ge V_\tau^* - \eta$.
Then we can compute a committee $\Pi$
 which is a $(2L\frac{1-\gamma^{h+1}}{1-\gamma} \epsilon + \eta,(1-\frac{1}{e})(1-\delta^*(K,\epsilon)-K\beta))$ for $\Gamma$ when $\gamma < 1$ and $(2Lh\epsilon+\eta,(1-\frac{1}{e})(1-\delta^*(K,\epsilon)-K\beta))$ for $\Gamma$ when $\gamma=1$  with probability at least $1-\alpha$.
\end{theorem}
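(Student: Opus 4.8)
The plan is to compose the two ingredients already established: the parameter-cover guarantee for \textsc{GIA} (Theorem~\ref{thm:greedy_gaurantee}) and the simulation-type bound relating $\ell_\infty$-closeness of task parameters to value loss (Lemma~\ref{thm:linear_reg}), while absorbing the extra $\eta$ slack coming from the assumed approximate-RL oracle. First I would fix the construction: run \textsc{GIA} on the sample $T$ to obtain $C=\{\theta_1,\dots,\theta_K\}$, and for each $k$ invoke the oracle to get a policy $\pi_k$ with $V_{\tau_k}^{\pi_k}\ge V_{\tau_k}^*-\eta$; set $\Pi=\{\pi_1,\dots,\pi_K\}$. Since $n\ge \frac{9\log(5/\alpha)}{2\beta^2}$, Theorem~\ref{thm:greedy_gaurantee} gives that, with probability at least $1-\alpha$ over the draw of $T$, $C$ is an $(\epsilon,(1-\tfrac1e)(1-\delta^*(K,\epsilon)-K\beta))$-parameter-cover of $\Gamma$. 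I condition on this event for the remainder of the argument.

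The core is a per-task chaining estimate. The key auxiliary fact, which I would extract from the proof of Lemma~\ref{thm:linear_reg}, is that under shared dynamics and $L$-Lipschitz rewards the map $\theta\mapsto V_\theta^\pi$ is Lipschitz with constant $L':=L\frac{1-\gamma^{h+1}}{1-\gamma}$ (resp.\ $L':=Lh$ when $\gamma=1$) for \emph{any fixed} policy $\pi$, because $V_\theta^\pi-V_{\theta'}^\pi=\mathbb{E}\big[\sum_t\gamma^t(r_\theta-r_{\theta'})(s_t,a_t)\big]$ over the same trajectory law and each summand is bounded by $L\|\theta-\theta'\|_\infty$. Now take any covered test task $\theta\sim\Gamma$, i.e.\ one with $\|\theta-\theta_k\|_\infty\le\epsilon$ for some $k$. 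Then $V_\theta^{\pi_k}\ge V_{\theta_k}^{\pi_k}-L'\epsilon\ge V_{\theta_k}^*-\eta-L'\epsilon = V_{\theta_k}^{\pi_{\theta_k}^*}-\eta-L'\epsilon\ge V_{\theta_k}^{\pi_\theta^*}-\eta-L'\epsilon\ge V_\theta^{\pi_\theta^*}-\eta-2L'\epsilon=V_\theta^*-\eta-2L'\epsilon$, where the first and last inequalities apply the Lipschitz fact to $\pi_k$ and to $\pi_\theta^*$ respectively, the second uses $\eta$-optimality of $\pi_k$ on $\tau_{\theta_k}$, and the middle uses optimality of $\pi_{\theta_k}^*$ on $\tau_{\theta_k}$. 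Since $V_\theta^\Pi=\max_{\pi\in\Pi}V_\theta^\pi\ge V_\theta^{\pi_k}$, the committee covers $\theta$ with slack $2L'\epsilon+\eta$.

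Putting it together: conditioned on the $1-\alpha$ event, the set of covered $\theta$ has $\Gamma$-probability at least $(1-\tfrac1e)(1-\delta^*(K,\epsilon)-K\beta)$, and on that set $V_\theta^\Pi\ge V_\theta^*-(2L'\epsilon+\eta)$; by Definition~\ref{D:committee-cover} this is exactly the claimed $(2L\frac{1-\gamma^{h+1}}{1-\gamma}\epsilon+\eta,\,(1-\tfrac1e)(1-\delta^*(K,\epsilon)-K\beta))$-cover when $\gamma<1$, and the $2Lh\epsilon+\eta$ version when $\gamma=1$.

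I expect the main obstacle to be handling the $\eta$-approximate policies cleanly: Lemma~\ref{thm:linear_reg} is stated for exact optimal policies $\pi_j^*$, so a naive substitution of $\pi_k$ either does not apply or doubles the $\epsilon$-constant to $4L'\epsilon$. The fix, as above, is to apply the per-policy Lipschitz bound directly to the trained policy $\pi_k$ and to invoke optimality only through $\pi_{\theta_k}^*$ on its own task, which keeps the constant at $2L'$. A secondary point worth stating is that the $1-\alpha$ randomness (over the sample $T$, hence over the learned $\Pi$) and the $1-\delta$ randomness (over the test task $\theta\sim\Gamma$) are over disjoint sources, so no union bound is needed---one simply conditions on the former and then reasons about the latter.
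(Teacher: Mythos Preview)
Your proposal is correct and follows exactly the approach the paper indicates: the paper does not give a standalone proof of this theorem but simply states that it is obtained by combining Theorem~\ref{thm:greedy_gaurantee} and Lemma~\ref{thm:linear_reg}, which is precisely what you do. Your explicit chaining argument to absorb the $\eta$ slack from the approximate-RL oracle---applying the per-policy Lipschitz bound to $\pi_k$ and to $\pi_\theta^*$ separately rather than invoking Lemma~\ref{thm:linear_reg} as a black box---is the right way to fill in the one detail the paper leaves implicit, and it recovers the stated $2L'\epsilon+\eta$ constant without inflation.
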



\subsection{Few-Shot Adaptation}

\label{S:fewshot}
A particularly useful consequence of learning a policy committee $\Pi$ that is a $(\epsilon,1-\delta)$-cover is that we can leverage it in meta-learning for few-shot adaptation. The algorithmic idea is straightforward: evaluate each of $K$ policies in $\Pi$ by computing a sample average sum of rewards over $N$ randomly initialized episodes, and choose the best policy $\pi \in \Pi$ in terms of empirical average reward.

In particular, suppose that $\gamma = 1$.
We now show that this translates into a few-shot sample complexity on a previously unseen task $\tau$ that is linear in $K$ (the size of the committee). Details of the proof are in Appendix~\ref{A:adaptation}.
\begin{theorem}\label{thm:online_repetition}
Suppose $\Pi$ is a $(\epsilon,1-\delta)$-cover for $\Gamma$ and let $\tau \sim \Gamma$. Under some mild conditions,
if we run 
$p \ge\frac{32h(H+1)^2\log(4/\alpha)}{(\beta-2H)^2}$ episodes for each policy $\pi \in \Pi$, where $H$ is a constant, the policy $\pi$ that maximizes the empirical return yields $V_\tau^\pi \ge V_\tau^* -\epsilon-\beta$ with probability at least $1-\delta-\alpha$, where $V_\tau^*$ is the optimal reward for $\tau$.
\end{theorem}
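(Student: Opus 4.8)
The plan is to combine the covering guarantee of $\Pi$ with a uniform concentration bound over the $K$ committee members, and then close the argument with a triangle inequality comparing the empirically selected policy against a policy that covers $\tau$. Two events need to be controlled, and the conclusion holds on their intersection: the covering event $E_{\mathrm{cov}}$, that some $\pi^\dagger \in \Pi$ satisfies $V_\tau^{\pi^\dagger} \ge V_\tau^* - \epsilon$, and the concentration event $E_{\mathrm{conc}}$, that every empirical value estimate is close to its target. Since $\Pi$ is an $(\epsilon,1-\delta)$-cover and $\tau \sim \Gamma$, Definition~\ref{D:committee-cover} gives $\Pr(E_{\mathrm{cov}}) \ge 1-\delta$; moreover $\pi^\dagger$ depends on $\tau$ only and is independent of the rollouts subsequently used for evaluation.

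Next I would bound the per-policy estimation error. For a fixed $\pi$, write $\hat V_\tau^\pi = \frac1p\sum_{j=1}^p G_j^\pi$ for the average of the $p$ i.i.d.\ episode returns. Under the mild conditions ($\gamma = 1$, bounded per-step rewards, and a bounded span/initialization parameter $H$), each $G_j^\pi$ is a bounded random variable whose mean differs from $V_\tau^\pi$ by at most a term of order $H$ and whose fluctuations around its mean are controlled by a parameter scaling with $h$ and $H$. Applying Hoeffding's inequality to each of the $K$ members together with a union bound shows that the stated choice $p \ge \frac{32h(H+1)^2\log(4/\alpha)}{(\beta-2H)^2}$ forces $|\hat V_\tau^\pi - V_\tau^\pi| \le \beta/2$ to hold simultaneously for all $\pi \in \Pi$ with probability at least $1-\alpha$; call this event $E_{\mathrm{conc}}$. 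The $\beta-2H$ in the denominator arises from allocating the per-policy slack $\beta/2$ into a deviation part and the $O(H)$ bias part.

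On $E_{\mathrm{cov}}\cap E_{\mathrm{conc}}$, let $\hat\pi \in \arg\max_{\pi\in\Pi}\hat V_\tau^\pi$ be the selected policy. Then
\[
V_\tau^{\hat\pi} \;\ge\; \hat V_\tau^{\hat\pi} - \beta/2 \;\ge\; \hat V_\tau^{\pi^\dagger} - \beta/2 \;\ge\; V_\tau^{\pi^\dagger} - \beta \;\ge\; V_\tau^* - \epsilon - \beta,
\]
using $E_{\mathrm{conc}}$ at $\hat\pi$, the empirical optimality of $\hat\pi$, $E_{\mathrm{conc}}$ at $\pi^\dagger$, and $E_{\mathrm{cov}}$, in that order. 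A union bound gives $\Pr(E_{\mathrm{cov}}\cap E_{\mathrm{conc}}) \ge 1-\delta-\alpha$, which is exactly the claimed statement.

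The step I expect to be the main obstacle is the concentration bound: making precise, from the "mild conditions," how the episode-return concentration constant depends on $h$ and $H$ and where the $O(H)$ bias comes from (e.g.\ a span or initial-state mismatch between the evaluation rollouts and the value $V_\tau^\pi$ defined under $\rho$), and arranging the bound to hold uniformly over the \emph{data-dependent} choice $\hat\pi$ rather than only the fixed $\pi^\dagger$ — which is what necessitates the union bound over all $K$ members. The remaining pieces are routine Hoeffding and triangle-inequality bookkeeping.
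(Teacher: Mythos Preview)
Your high-level plan---cover event $E_{\mathrm{cov}}$, concentration event $E_{\mathrm{conc}}$, then a triangle-inequality chain comparing $\hat\pi$ against a covering policy $\pi^\dagger$---is correct and in fact tidier than the paper's argument. The differences are in the concentration step and the selection argument.

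\emph{Concentration.} The paper does not derive the deviation bound from raw Hoeffding on episode returns; it invokes a ready-made result, Lemma~1 of \citet{azar2013regret}, valid under unichain/recurrence assumptions (these are the ``mild conditions''): with probability $\ge 1-\alpha$, $|\hat\mu^\pi-\mu^\pi|\le 2(H+1)\sqrt{2\log(2/\alpha)/(ph)}+H/h$, where $\mu^\pi=V_\tau^\pi/h$ is the average reward and $H$ is the span of the bias function $\lambda^\pi$. This is where both the $(H+1)$ multiplier and the additive $H/h$ bias come from, and solving $2(H+1)\sqrt{2\log(2/\alpha)/(ph)}+H/h\le \beta/(2h)$ yields exactly the stated $p$. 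Your intuition that the $\beta-2H$ arises from splitting $\beta/2$ into a deviation part and an $O(H)$ bias is right, but the bias is the Markov-chain bias span, not an initialization mismatch, and a bare Hoeffding bound on i.i.d.\ episode returns will not produce these $H$-dependent terms.

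\emph{Selection.} Rather than your triangle inequality, the paper does a case split on the gap between the best and second-best average rewards in $\Pi$: if $\mu^+-\mu^->\beta/h$, concentration (at level $\alpha/2$ each) on just those two policies forces $\hat\mu^+>\hat\mu^-$; if the gap is $\le\beta/h$, any choice suffices. This is why the constant is $\log(4/\alpha)$ and not $\log(2K/\alpha)$. Your union bound over all $K$ members is the cleaner argument, but with the stated $p$ it gives failure probability $K\alpha/2$, not $\alpha$; the paper's pairwise argument avoids the $K$-dependence, though its extension from two policies to the full committee (``the same holds for all pairs'') is left informal.
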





\subsection{Training}

The output of the \textsc{Clustering} step above is a set of representative task parameters $C = \{\theta_1,\ldots,\theta_K\}$.
The simplest way to use these to obtain a policy committee $\Pi$ is to train a policy $\pi_k$ optimized for each $\theta_k \in C$.
However, this ignores the set of tasks that comprise each cluster $k$ associated with a representative $\theta_k$ (i.e., the set of tasks closest to $\theta_k$). 
As demonstrated empirically in the multi-task RL literature, using multiple tasks to learn a shared representation facilitates generalization (effectively enabling the model to learn features that are beneficial to all tasks in the cluster)~\citep{sodhani2021multi,sun2022paco,yang2020multi}. 

To address this, we propose an alternative which trains a policy $\pi_k$ to maximize the sum of rewards of the tasks in cluster $k$.
Notably, our approach can use \emph{any} RL algorithm to learn a policy associated with a cluster of tasks; in the experiments below, we use the most effective MTRL or meta-RL baseline for this purpose.

Furthermore, we emphasize that the additional overhead introduced by our method (due to clustering) is small, with the computational complexity in practice being predominantly determined by the RL problem. To illustrate, our Meta-World experiments show that training a single policy for 1 million steps necessitates approximately 40 hours using an A40 GPU. Conversely, the clustering process completes in roughly 1 second within a Google Colab notebook, and obtaining task embeddings takes approximately 2 minutes on an A40 GPU. Importantly, in cases where efficient RL learning is achievable \citep{brafman2002r,kearns2002near} and the dimension d remains constant, our approach additionally boasts polynomial algorithmic complexity.

\subsection{Dealing with Non-Parametric Tasks}
\label{S:llmembedding}

Our approach assumes that tasks are parametric, so that we can reason (particularly in the clustering step) about parameter similarity.
Many practical multi-task settings, however, are non-parametric, so that our algorithmic framework cannot be applied directly.
In such cases, our approach can make use of any available method for extracting a parametric representation of an arbitrary task $\tau$.
For example, it is often the case that tasks can be either described in natural language.
We propose to leverage this property and use text embedding (e.g., from pretrained LLMs) as the parametric representation of otherwise non-parametric tasks, where this is feasible.
Our hypothesis is that this embedding captures the most relevant semantic aspects of many tasks in practice, a hypothesis that our results below validate in the context of the Meta-World benchmark.
This is analogous to what was done by \citet{bing2023meta}, with the main difference being that our task descriptions are with respect to higher-level goals, whereas \citet{bing2023meta} describe tasks in terms of associated plans.
We provide the full list of task descriptions for the Meta-World environment in Appendix~\ref{S:llmdesc}. 
\section{Experiments}
\label{S:exp}
We study the effectiveness of our approach---\textsc{pacman}---in two environments, \emph{MuJoCo}~\citep{todorov2012mujoco} and \emph{Meta-World}~\citep{yu2021metaworldbenchmarkevaluationmultitask}.
In the former, the tasks are low-dimensional and parametric, and we only vary the reward functions, whereas the latter has non-parametric robotic manipulation tasks with varying reward and transition dynamics.

\noindent\textbf{MuJoCo }
We selected two commonly used MuJoCo environments. The first is HalfCheetahVel where the agent has to run at different velocities, and rewards are based on the distance to a target velocity.
The second is HumanoidDir where the agent has to move along the preferred direction, and the reward is the distance to the target direction. 
In both, we generate diverse rewards by randomly generating target velocity and direction, respectively, and use 100 tasks for training and another 100 for testing (in both zero-shot and few-shot settings), with parameters generated from a Gaussian mixture model with 5 Gaussians.
In few-shot cases, we draw a single task for fine-tuning, and average the result over 10 tasks.
For clustering, we use $K=3, \epsilon = .6$, and use the gradient-based approach initialized with the result of the \emph{Greedy Intersection} algorithm.
For few-shot learning, we fine-tune all methods for 100 epochs. 



\noindent\textbf{Meta-World }
We focus on the set of robotic manipulation tasks in MT50, of which we use \emph{30 for training and 20 for testing}.
This makes the learning problem significantly more challenging than typical in prior MTRL and meta-RL work, where training sets are much larger compared to test sets (5 tests and 40 trains in the traditional MT45 setting).
We leverage an LLM  to generate a parameterization (Section~\ref{S:llmembedding}) of the task.
Specifically, text descriptions (see Appendix~\ref{S:llmdesc}) are fed to \enquote{Phi-3 Mini-128k Instruct}~\citep{microsoft_phi3} and we compute the channel-wise mean over the features of penultimate layer as a 50 dimensional parameterization for each task. 
We use $K=3$ and $\epsilon = .7$.

Additionally, we highlight that we use success rate instead of returns as our evaluation metric in Meta-World, as it has been the standard metric across different papers.



\subsection{Baselines and Evaluation}

We compare our approaches to 11 state-of-the-art baselines.
Five of these are designed for MTRL: 1) CMTA~\citep{lan2024contrastive}, 2) MOORE~\cite{hendawy2024multi}, 3) CARE~\citep{sodhani2021multi}, 4) soft modularization (Soft)~\citep{yang2020multi}, and 5) Multi task SAC~\citep{yu2020gradient}.
Five more are meta-RL algorithms: 1) MAML~\citep{DBLP:journals/corr/FinnAL17}, 2) RL2~\citep{duan2016rl}, 3) PEARL~\citep{rakelly2019efficient}, 4) VariBAD~\citep{zintgraf2020varibad}, and 5) AMAGO~\citep{grigsby2024amagoscalableincontextreinforcement}.
Finally, we also compare to the state-of-the-art approach using expectation–maximization (EM) to learn a policy committee (EM)~\citep{ivanov2024personalized}.

Our evaluation involves three settings: \emph{training}, \emph{test}, and \emph{few-shot}.
The training evaluation corresponds to standard MTRL.
The test evaluation uses a test set to evaluate all approaches with no fine-tuning.
Finally, our few-shot test evaluation allows a short round of fine-tuning on the test data.
For \textsc{pacman} we select the best-performing policy for training and test, and use the proposed few-shot approach to \emph{learn} the best policy through empirical policy evaluation for the few-shot test setting (see Section~\ref{S:fewshot}).
In all figures, error bounds are 1 sample standard deviation.



\begin{figure}[h]
    \centering
    \includegraphics[width=\linewidth]{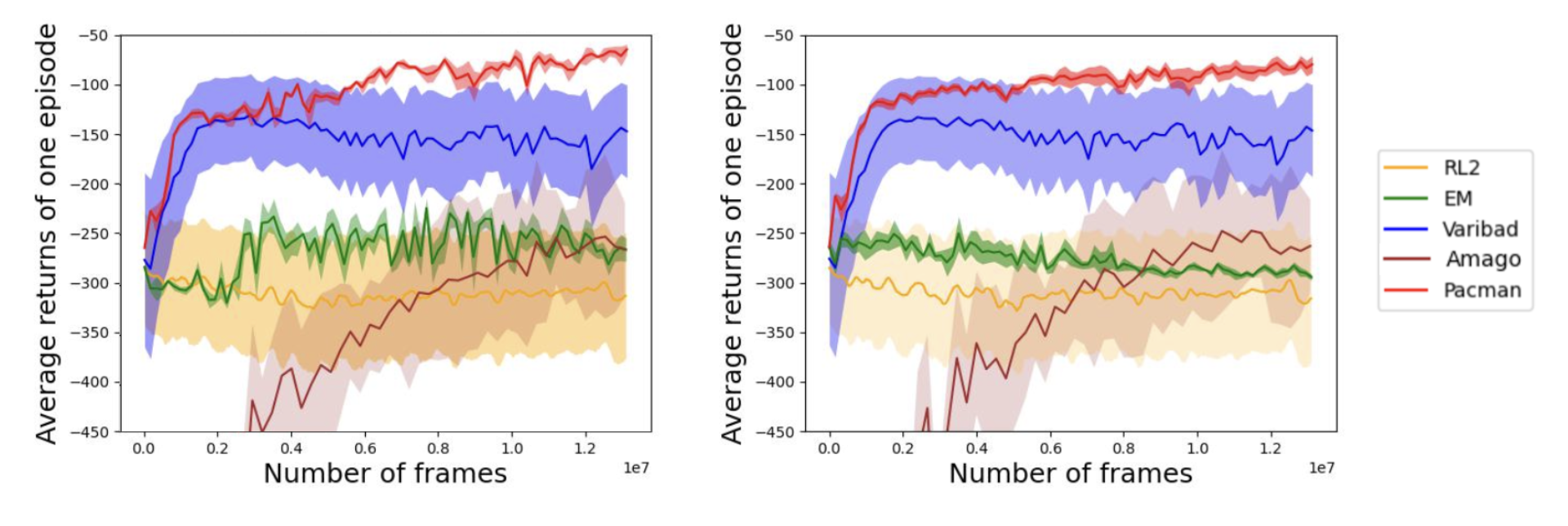}
    \caption{HalfCheetahVel train (left) and test (right) comparisons.}
    \label{F:mujoco-main}
\end{figure}
\subsection{Results}
\noindent\textbf{MuJoCo}
In the MuJoCo environment, we focus on \emph{personalization}, varying only reward functions and focusing on the ability to generalize to a diverse set of rewards.
Consequently, our baselines here include meta-RL approaches (RL2, VariBAD, AMAGO) and EM (personalized RL, which requires the dynamics to be shared across tasks, therefore not applicable for Meta-World), and \textsc{pacman} uses VariBAD as the within-cluster RL method.

Figure~\ref{F:mujoco-main} presents the MuJoCo results for the training and test evaluations in HalfCheetahVel.
We can see that \textsc{pacman} consistently outperforms the baselines in both evaluations, with VariBAD the only competitive baseline. The advantage of \textsc{pacman} is also pronounced in HumanoidDir, whose results are deferred to Appendix \ref{S:appendix-metaworld}.

\begin{table}[h]
\caption{Few-shot learning effectiveness (MuJoCo).}
\small
\label{T:mujoco-fewshot}
    \centering
    \begin{tabular}{|c|c|c|}
        \hline
        & \textbf{Halfcheetah} & \textbf{Humanoid} \\
        \hline
        RL2     & -314.37 $\pm$ 1.15 & 946.17 $\pm$ 0.73  \\
        VariBAD & -137.99 $\pm$ 1.14  & 1706.38 $\pm$ 0.75  \\
        EM      & -325.29 $\pm$ 1.84  & 947.06 $\pm$ 0.84  \\
        Amago   & -279.13 $\pm$ 0.67 & 1533.45 $\pm$ 0.49\\
        \hline
        \textbf{PACMAN}  & \textbf{-54.03 $\pm$ 1.34} & \textbf{2086.50 $\pm$ 0.89} \\
        \hline
    \end{tabular}
\end{table}

The few-shot comparison is provided in Table~\ref{T:mujoco-fewshot}, where the advantage of \textsc{pacman} is especially notable.
In HalfcheetahVel, the improvement over the best baseline is by a factor of more than 2.5, while in HumanoidDir it is over 22\%.

\noindent\textbf{Meta-World} Next, we turn to the complex multi-task Meta-World environment.
In this environment, our approach uses MOORE for within-cluster training.
Figure~\ref{F:MetaWorld-main} presents the results for training and test evaluations, where we compare to the MTRL baselines (all meta-RL baselines are significantly worse on these metrics, likely because the goals of these algorithms are primarily efficacy in few-shot settings).
\begin{figure}[h]
    \centering
    \includegraphics[width=\linewidth]{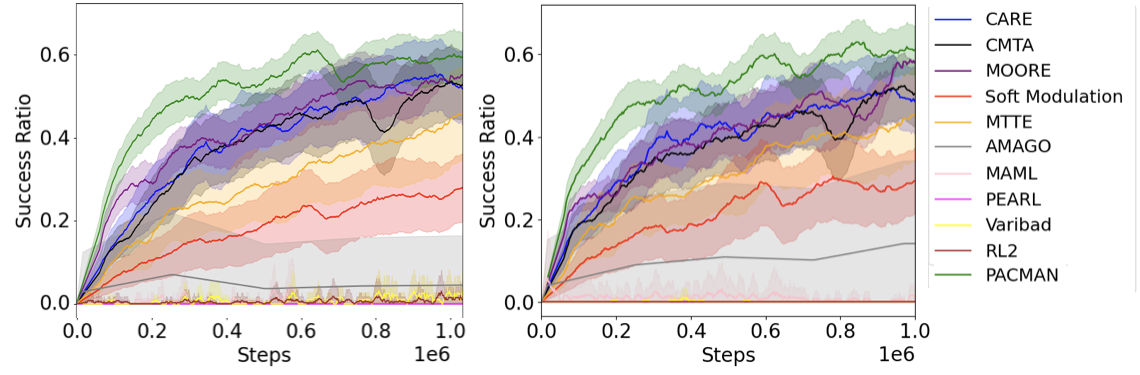}
    \caption{MetaWorld train (left) and test (right) comparisons.}
    \label{F:MetaWorld-main}
\end{figure}

We observe that \textsc{pacman} significantly outperforms all baselines in both train and test cases (e.g., $\sim$25\% improvement over the best baseline after 500K steps).

\begin{table}[h]
    \centering
    \caption{\small{Few-Shot Learning Results.}}
    \vspace{1mm}
   
    \label{T:mw-fewshot}
    \scriptsize
    \begin{tabular}{|c|c|c|}
        \hline
        \textbf{Method} & \textbf{6K Updates} & \textbf{12K Updates} \\
        \hline
        MAML    & 0.0025 $\pm$ 0.006  & 0.01 $\pm$ 0.03  \\
        PEARL   & 0.03 $\pm$ 0.03  & 0.27 $\pm$ 0.07  \\
        RL2     & 0.007 $\pm$ 0.01 & 0.02 $\pm$ 0.02  \\
        VariBAD & 0.025 $\pm$ 0.06  & 0.027 $\pm$ 0.07  \\
        AMAGO   & 0.08 $\pm$ 0.09  & .093 $\pm$ 0.09 \\\hline
        Soft    & 0.27 $\pm$ 0.07  & 0.26 $\pm$ 0.08  \\
        MTTE    & 0.37 $\pm$ 0.08  & 0.40 $\pm$ 0.10  \\
        CARE    & 0.39 $\pm$ 0.05  & 0.40 $\pm$ 0.06  \\
        CMTA    & 0.45 $\pm$ 0.07  & 0.34 $\pm$ 0.08  \\
        MOORE   & 0.41 $\pm$ 0.08  & 0.44 $\pm$ 0.11  \\\hline
        \textbf{PACMAN}  & \textbf{0.53} $\pm$ \textbf{0.02}  & \textbf{0.60} $\pm$ \textbf{0.02}  \\
        \hline
    \end{tabular}
\end{table}
The results for few-shot learning are provided in Table~\ref{T:mw-fewshot}. Performance is a moving average success rate for the last 2000 evaluation episodes over 3 seeds. 
Here, the advantage of \textsc{pacman} over all baselines is particularly notable.
First, somewhat surprisingly, the meta-RL baselines, with the exception of PEARL, underperform MTRL baselines in this setting.
This is because our evaluation is significantly more challenging, with only 30 training tasks but with 20 diverse test tasks, and the adaptation phase has a very short (6-12K updates) time horizon for few-shot training, than typical in prior work.
In contrast, MTRL methods fare reasonably well.
The proposed \textsc{pacman} approach, however, significantly outperforms all the baselines.
For example, only 12K updates suffice to reliably identify the best policy (comparing with zero-shot results in Table~\ref{T:mw-fewshot}), with the result outperforming the best baseline by $>$36\%.

\subsection{Further Empirical Investigation of Our Algorithm } \label{S:further}

We investigate our algorithmic contribution in two ways. 
First, we compare our method with three common clustering methods in Meta-World's zero-shot setting: KMeans++~\citep{arthur2006k}, DBScan~\citep{khan2014dbscan}, GMM~\citep{bishop2006pattern}, as well as with random clustering. 

\vspace{-0.3cm}
\begin{table}[h!]
\centering
\footnotesize
\begin{tabular}{|l|c|c|}
\hline
\textbf{Method}         & \textbf{125K Steps}          & \textbf{250K Steps}          \\ \hline
KMeans++                & \(0.22 \pm 0.06\)            & \(0.30 \pm 0.07\)            \\ 
DBScan                  & \(0.19 \pm 0.04\)                & \(0.28 \pm 0.06\)                \\ 
GMM                     & \(0.29 \pm 0.07\)                & \(0.33 \pm 0.08\)                \\ 
Random                  & \(0.22 \pm 0.04\)                & \(0.25 \pm 0.04\)                \\ \hline
\textbf{PACMAN (Ours)}    & \textbf{0.36 $\pm$ 0.07}   & \textbf{0.48 $\pm$ 0.09}   \\ \hline
\end{tabular}

\caption{Performance Comparison for different clustering methods at 125K and 250K Steps.}
\label{T:clustering_ablation}
\end{table}
\vspace{-0.1cm}
Notably, \textsc{pacman} exhibits \(\sim 45\%\) performance improvement over the next best clustering method for $K=3$.
We have also conducted the same ablations over the MuJoCo setting, where the advantage of our method is also significant; see Table~\ref{table:halfcheetah_clustering} in Appendix~\ref{A:hist} for more details. 
In addition, Figure~\ref{fig:hist} in
Appendix~\ref{A:hist} compares a histogram of rewards for \textsc{pacman} and KMeans++ in the MuJoCo Halfcheetah environment, showing a consistent distributional improvement (not merely in expectation), excepting a small number of outliers.



Additionally, we consider the impact of varying budget $K$.
We observe that efficacy of \textsc{pacman} is non-monotonic in $K$.
The reason is that once $K$ is sufficiently large to cover the entire set of training tasks, increasing it further reduces the number of training tasks in individual clusters and thereby hurts generalization, both within clusters and to previously unseen tasks.
Finally, we also consider the impact of changing the hyperparameter $\epsilon$, and observe that the results are relatively robust to small changes in $\epsilon$.

Overall, our method stands out by introducing only a single hyperparameter, $\epsilon$, which proves easy to tune. This simplicity is a significant advantage, especially when contrasted with the myriad of hyperparameters often encountered in typical deep RL methods. A practical approach to setting $\epsilon$ is to first compute the distances between all task embeddings and subsequently fine-tune it to ensure it remains small while still providing adequate coverage. Notably, our ablation studies further indicate that the performance of our method is far less sensitive to $\epsilon$ compared to the typical hyperparameter sensitivity observed in RL.
See Appendix~\ref{S:appendix-metaworld} for further details on these ablations.

\section{Conclusion and Limitations}
We developed a general algorithmic framework for learning policy committees for effective generalization and few-shot learning in multi-task settings with diverse tasks that may be unknown at training time.
We showed that our approach is theoretically grounded, and outperforms MTRL, meta-RL, and personalized RL baselines in both training, and zero-shot and few-shot test evaluations, often by a large margin.
Nevertheless, our approach exhibits several important limitations.
First, it requires tasks to be parametric, and while we demonstrate how LLMs can be used to effectively obtain task embeddings in the Meta-World environments, it is not clear how to do so generally.
Second, it includes a scalar hyperparameter, $\epsilon$, which determines how we evaluate the quality of task coverage and needs to be adjusted separately for each environment, although this hyperparameter is easily tunable in practice.

\section*{Acknowledgements}
This work was supported by the National Science Foundation (NSF) under grants IIS-2214141 and CCF-2403758, the Office of Naval Research (ONR) under award N00014-24-1-2663, the Army Research Office (ARO) under grant W911NF-25-1-0059, and with the generous support of NVIDIA Corporation.

\section*{Impact Statement}
This paper provides a theoretically grounded and empirically validated framework for personalization in Markov Decision Processes (MDPs) through the construction of policy committees.The simplicity and modularity of our approach enable wide applicability across domains such as robotics, adaptive systems, and user-centered decision-making frameworks. Furthermore, our method inherently promotes fairness by ensuring that a diverse range of tasks are represented by at least one effective policy in the committee. The clustering strategy may also extend beyond reinforcement learning to other domains where balancing efficiency and individualization is critical. By offering a scalable and provably efficient solution, our work lays a foundation for future advances in equitable and adaptive AI systems.

\bibliography{references}
\bibliographystyle{icml2025}

\appendix
\clearpage
\newpage
\onecolumn

\section{Proof of Corollary~\ref{cor:starc}}\label{A:starc}
\begin{definition}
     A \textit{potential function} is a function $\Phi: \mathcal{S}\xrightarrow{} \mathbb{R}$. Given a discount $\gamma$, $r_1$ and $r_2$ differ by potential shaping if for some potential $\Phi$, we have that $r_2(s,a,s')=r_1(s,a,s')+\gamma\cdot\Phi(s')-\Phi(s)$. 
\end{definition}

\begin{definition}
    Given a transition function $\mathcal{T}$, $r_1$ and $r_2$ differ by \textit{$S'$-redistribution} if $\mathbb{E}_{S' \sim \mathcal{T}(s,a)}[r_2(s,a,s')]=\mathbb{E}_{S' \sim \mathcal{T}(s,a)}[r_1(s,a,s')]$. 
\end{definition}

\begin{definition}
    $r_1$ and $r_2$ differ by positive linear scaling if $r_2(s,a,s')=c \cdot r_1(s,a,s')$ for some positive constant $c$.
\end{definition}
\begin{proof}
    Theorem 2.6 from~\cite{skalse2023misspecification} says that two tasks $\tau_1,\tau_2$ have the same ordering of policies if and only if $r_1$ and $r_2$ differ by potential shaping, positive linear scaling, and $S'$-redistribution. Therefore, if we could find optimal policies for these two tasks separately; they necessarily differ.
    And by forming a policy committee of these two optimal policies, we have $V^{\Pi}_{\tau_1}\ge V^{\pi}_{\tau_1}$, $V^{\Pi}_{\tau_2}\ge V^{\pi}_{\tau_2}$, and $V^{\Pi}_{\tau_1}+V^{\Pi}_{\tau_2} > V^{\pi}_{\tau_1}+ V^{\pi}_{\tau_2}$ for any policy $\pi$.
\end{proof}


\section{ Proof of Theorem \ref{thm:NPhard}}\label{A:NP}

\begin{definition}[Gap preserving reduction for a maximization problem]
    Assume $\Pi_1$ and $\Pi_2$ are some maximization problems. A gap-preserving reduction from $\Pi_1$ to $\Pi_2$ comes with four parameters (functions) $f_1, \alpha, f_2$ and $\beta$. Given an instance $x$ of $\Pi_1$, the reduction computes in polynomial time an instance $y$ of $\Pi_2$ such that:
$OPT(x) \ge f_1(x) \implies OPT(y) \ge f_2(y)$
and
$OPT(x) < \alpha |x| f_1(x) \implies OPT(y) < \beta |y| f_2(y)$.
\end{definition}

\begin{proof}

Let $G = (V , E)$ be an undirected graph with $5$ vertices and $2$ edges as follows: 

\begin{center}
      \begin{tikzpicture}
    \foreach \pos/\name in {{(0,1)/1}, {(2,0)/2}, {(4,1)/3}, {(2,2)/4}, {(2,1)/5}}
        \node[circle,draw] (\name) at \pos {$\name$};
    \foreach \source/\dest in {3/5, 4/5}
        \path (\source) edge (\dest);
\end{tikzpicture}
\end{center}

We create an instance of Max-coverage for a set of $\theta$s in $\mathbb{R}^n$
by filling out their coordinate matrix
$A_{ij}=\begin{cases}
0 & \text{if } i=j\\

    1.5\epsilon & \text{if } i,j \text{ are adjacent} \\
     2.5 \epsilon & \text{if } i,j \text{ are not adjacent}\end{cases}   $ :

\begin{table}[h!]
\centering
 \begin{tabular}{c|ccccc}
         dim & $\theta_1$ & $\theta_2$ & $\theta_3$ & $\theta_4$ & $\theta_5$\\ \hline

    $1$&     0 & 2.5& 2.5 & 2.5 & 2.5 \\
    $2$&     2.5 & 0& 2.5 & 2.5 & 2.5 \\
$3$&     2.5 & 2.5& 0 & 2.5 & 1.5 \\
$4$&     2.5 & 2.5& 2.5 & 0 & 1.5 \\
$5$&     2.5 & 2.5& 1.5 & 1.5 & 0 \\
    \end{tabular}
   
    \label{tab:my_label}
\end{table}

Let $\theta_1=[0,2.5,2.5,2.5,2.5]$,
$\theta_2=[2.5,0,2.5,2.5,2.5]$,
$\theta_3=[2.5,2.5,0,2.5,1.5]$,
$\theta_4=[2.5,2.5,2.5,0,2.5]$,
$\theta_5=[2.5,2.5,1.5,1.5,0].$

Projected onto the fifth axis, our thetas look like:

\begin{center}
    \begin{tikzpicture} [scale=2]
 
\draw[thick] (0,0) coordinate (A) -- (1,0) coordinate (B);
\draw (A) node[left] {$x_4-\epsilon$};
\draw (B) node[right] {$x_4+\epsilon$};

{%
    \draw (0.5-0.05,-0.05) -- (0.5+0.05,0.05);
    \draw (0.5-0.05,0.05) -- (0.5+0.05,-0.05);
}

\draw[thick] (0,0.2) coordinate (C)-- (1,0.2)coordinate (D);
\draw (C) node[left] {$ x_3-\epsilon$};
\draw (D) node[right] {$x_3+\epsilon$};

{%
    \draw (0.5-0.05,0.2-0.05) -- (0.5+0.05,0.2+0.05);
    \draw (0.5-0.05,0.2+0.05) -- (0.5+0.05,0.2-0.05);
}

\draw[thick] (0.5,0.4) coordinate (E) -- (1.5,0.4) coordinate (F);
\draw (E) node[left] {$x_2-\epsilon$};
\draw (F) node[right] {$x_2+\epsilon$};
{%
    \draw (1-0.05,0.4-0.05) -- (1+0.05,0.4+0.05);
    \draw (1-0.05,0.4+0.05) -- (1+0.05,0.4-0.05);
}

\draw[thick] (0.5,0.6) coordinate (E) -- (1.5,0.6) coordinate (F);
\draw (E) node[left] {$ x_1-\epsilon$};
\draw (F) node[right] {$x_1+\epsilon$};
{%
    \draw (1-0.05,0.6-0.05) -- (1+0.05,0.6+0.05);
    \draw (1-0.05,0.6+0.05) -- (1+0.05,0.6-0.05);
}

\draw[thick] (-0.7,0.8) coordinate (G) -- (0.3,0.8) coordinate (H);
\draw (G) node[left] {$ x_5-\epsilon$};
\draw (H) node[right] {$x_5+\epsilon$};
{%
    \draw (-0.2-0.05,0.8-0.05) -- (-0.2+0.05,0.8+0.05);
    \draw (-0.2-0.05,0.8+0.05) -- (-0.2+0.05,0.8-0.05);
}

\end{tikzpicture}
\end{center}

And similarly, onto the third axis:

\begin{center}
    \begin{tikzpicture} [scale=2]
 
\draw[thick] (0,0) coordinate (A) -- (1,0) coordinate (B);
\draw (A) node[left] {$x_5-\epsilon$};
\draw (B) node[right] {$x_5+\epsilon$};

{%
    \draw (0.5-0.05,-0.05) -- (0.5+0.05,0.05);
    \draw (0.5-0.05,0.05) -- (0.5+0.05,-0.05);
}

\draw[thick] (0.5,0.2) coordinate (C)-- (1.5,0.2)coordinate (D);
\draw (C) node[left] {$ x_4-\epsilon$};
\draw (D) node[right] {$x_4+\epsilon$};

{%
    \draw (1-0.05,0.2-0.05) -- (1+0.05,0.2+0.05);
    \draw (1-0.05,0.2+0.05) -- (1+0.05,0.2-0.05);
}

\draw[thick] (0.5,0.4) coordinate (E) -- (1.5,0.4) coordinate (F);
\draw (E) node[left] {$x_2-\epsilon$};
\draw (F) node[right] {$x_2+\epsilon$};
{%
    \draw (1-0.05,0.4-0.05) -- (1+0.05,0.4+0.05);
    \draw (1-0.05,0.4+0.05) -- (1+0.05,0.4-0.05);
}

\draw[thick] (0.5,0.6) coordinate (E) -- (1.5,0.6) coordinate (F);
\draw (E) node[left] {$ x_1-\epsilon$};
\draw (F) node[right] {$x_1+\epsilon$};
{%
    \draw (1-0.05,0.6-0.05) -- (1+0.05,0.6+0.05);
    \draw (1-0.05,0.6+0.05) -- (1+0.05,0.6-0.05);
}

\draw[thick] (-0.7,0.8) coordinate (G) -- (0.3,0.8) coordinate (H);
\draw (G) node[left] {$ x_3-\epsilon$};
\draw (H) node[right] {$x_3+\epsilon$};
{%
    \draw (-0.2-0.05,0.8-0.05) -- (-0.2+0.05,0.8+0.05);
    \draw (-0.2-0.05,0.8+0.05) -- (-0.2+0.05,0.8-0.05);
}

\end{tikzpicture}
\end{center}

We claim that we have constructed a gap-preserving reduction for any $t>0$
$$OPT(A) = n \implies OPT(B) = n $$
$$OPT(A) < n^{1-t} \implies OPT(B) < n^{1-t}.$$

To begin with, if the Max-Clique instance consists of a complete graph, then the $\theta$s we created have coordinates equal to $1.5\epsilon
$ everywhere except $i$-th coordinate, which is zero. So they can all be covered by one $\Tilde{\theta}=[0.7\epsilon,0.7\epsilon,\dots,0.7\epsilon]$, the coverage size is $n$. Therefore, the first implication is true.

Then for the second statement, we argue with the contrapositive: assume that one of the maximum coverage sets is $S=\{i_1,\dots,i_k\}$ and $k\ge n^{1-t}$. We have to prove that the maximum clique has size greater than or equal to $k\ge n^{1-t}$. 

Specifically, we prove that the vertices corresponding to the elements from $S$ form a clique.

If $\theta_i,\theta_j$ are from the set $S$, then they should be covered on each dimension since the $||\theta_i-\theta_j||_\infty= \max |\theta_i^d-\theta_j^d| \le \epsilon.$ So $\theta_i,\theta_j$ have to be adjacent, because otherwise their corresponding coordinates on the $i$-th and $j$-th dimension are more than $\epsilon$ away. 
For example, we have theta $\theta_3^3=0$ and $\theta_5^5=0$, so $\theta_5^3$ and  $\theta_5^3$ must be $1.5\epsilon$ rather than $2.5\epsilon$, which indicates that $3, 5$ are neighbors in the graph.

Therefore, the points in $S$ correspond to 
a clique of size $k \ge n^{1-t}$
in the graph.  
Thus, if the graph $G$ has a clique of size less than $n^{1-t}$, then the maximum coverage set has size less than $n^{1-t}$.
\end{proof}

\section{Pseudocode of Greedy Intersection Algorithm} \label{S:code}

The full pseudocode for the \emph{Greedy Intersection Algorithm (GIA)} algorithm is provided as Algorithm~\ref{alg:greedy_intersection}.

\begin{algorithm}[h]
    \caption{Greedy Intersection}
    \label{alg:greedy_intersection}
    \textbf{Input}: $T = \{\theta_i\}_{i=1}^N$, $\epsilon > 0$, $K \ge 1$ \\
    \textbf{Output}: Parameter cover $C$
    \begin{algorithmic}[1]
    \STATE $C \gets []$
        \FOR{$round\ k$ = $1$ to $K$}
            \FOR{$dimension\ m$ = $1$ to $d$}
                \STATE Sort $T$ in ascending order based on their $m$-th coordinates
                \STATE $lists_m \gets []$
                \FOR{$indiviual\ i = 2$ to $N$}
                    \STATE $S_i \gets [\theta_i]$
                    \FOR{$j = i-1$ to $1$}
                        \IF{$\theta_i$'s $m$-th coordinate $< \theta_j$'s $m$-th coordinate $+2\epsilon$}
                            \STATE Add $\theta_j$ to $S_i$
                        \ELSE
                            \IF{ $lists_m[-1] \subseteq S_{i}$ }
                                \STATE $lists_m[-1] \gets S_{i}$ 
                            \ELSE
                                \STATE Add $S_{i}$ to $lists_m$
                            \ENDIF
                            \STATE \textbf{break}
                        \ENDIF
                    \ENDFOR
                \ENDFOR
            \ENDFOR

            \STATE $S^{1*},\dots, S^{m*} \gets  \text{argmax}_{S^1\in lists_1,\dots, S^m\in lists_m} |S^1\cap \dots \cap S^m|$
             \STATE $covered \gets S^{1*}\cap \dots \cap S^{m*}$
              \STATE $\hat{\theta}_k \gets $ average of the $covered$
            \STATE $T \gets T - covered$
            \STATE $C$.adds($\hat{\theta}_k$)
        \ENDFOR
        \STATE \textbf{return} $C$
    \end{algorithmic}
\end{algorithm}

\section{ Proof of Theorem \ref{thm:greedy_gaurantee}} \label{A:greedy_gaurantee}
Based on the proof of maxmizing monotone submodular functions by \cite{nemhauser1978analysis}.



\begin{lemma}\label{lm:pf4}Suppose $1-\delta^*(K)$ is the optimal $(\epsilon,1-\delta)$-parameter-cover of $\Gamma$ achievable with fixed $K$. With probability at least $1-\alpha$ ,
    the probability of $\theta$ from $\Gamma$ getting covered by the first $i$ representatives generated by Algorithm \ref{alg:greedy_intersection} is greater than $\frac{1-\delta^*(K)-K\beta}{K}\sum_{j=0}^{i-1}(1-1/K)^j.$
\end{lemma}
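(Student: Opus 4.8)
The plan is to recognize \textsc{GIA} as precisely the greedy algorithm for \textsc{Max-$K$-Cover} executed on the i.i.d.\ sample $T$, use the Nemhauser--Wolsey--Fisher guarantee to control the \emph{empirical} coverage, and then lift the empirical statement to the population distribution $\Gamma$ by concentration.

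\emph{Submodular reduction and the greedy bound.} For a set of centers $C\subseteq\mathbb{R}^d$ define the empirical coverage count $f(C)=\bigl|\{\theta\in T:\min_{\theta'\in C}\|\theta-\theta'\|_\infty\le\epsilon\}\bigr|$. A subset $S\subseteq T$ lies in a common $\ell_\infty$-ball of radius $\epsilon$ iff its coordinate-wise diameter is at most $2\epsilon$ (take the coordinate-wise midpoint as center), which is exactly the family \textsc{GIA} examines in its intersection step; after deleting the points covered in earlier rounds, round $k$ picks a center $\hat\theta_k$ of maximum marginal gain $f(C_{k-1}\cup\{\hat\theta_k\})-f(C_{k-1})$ with $C_{k-1}=\{\hat\theta_1,\ldots,\hat\theta_{k-1}\}$. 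Since $f$ is a coverage function it is monotone and submodular, so the standard greedy analysis (telescoping $\delta_j:=O_K-f(C_j)$ via $\delta_j\le K(\delta_j-\delta_{j+1})$, where $O_K:=\max_{|C|\le K}f(C)$) gives
\[
f(C_i)\;\ge\;\Bigl(1-(1-\tfrac1K)^i\Bigr)O_K\;=\;\frac{O_K}{K}\sum_{j=0}^{i-1}(1-\tfrac1K)^j .
\]

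\emph{Concentration.} Let $C^\star=\{\theta^\star_1,\ldots,\theta^\star_K\}$ attain the optimal population coverage, $\Pr_{\theta\sim\Gamma}[\min_{\theta'\in C^\star}\|\theta-\theta'\|_\infty\le\epsilon]=1-\delta^*(K)$. Since $C^\star$ is fixed and $T$ consists of $n$ i.i.d.\ draws, Hoeffding's inequality gives $\tfrac1n f(C^\star)\ge 1-\delta^*(K)-\beta$ with probability $\ge 1-e^{-2n\beta^2}$, hence $O_K\ge f(C^\star)\ge n(1-\delta^*(K)-\beta)$ on that event. The more delicate direction---and the main obstacle---is converting $f(C_i)/n$ back into the \emph{population} coverage of $C_i$: the set $C_i$ is data-dependent, so a single Hoeffding bound is invalid and one needs a one-sided uniform-convergence bound over the class of unions of at most $K$ axis-aligned $\ell_\infty$-balls of radius $\epsilon$. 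That class has VC dimension $O(dK)$ (each ball is an intersection of $2d$ axis-parallel halfspaces, and one takes $K$-fold unions), so its growth function is polynomial and, in the constant-$d$ regime assumed for \textsc{GIA}, uniform convergence at scale proportional to $\beta$ costs only a constant factor in $n$; this is where the numerical constants $9$ and $5$ in $n\ge\frac{9\log(5/\alpha)}{2\beta^2}$ arise after union-bounding the $O(1)$ failure events.

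\emph{Putting it together.} On the intersection of the good events, combining the greedy inequality, $O_K\ge n(1-\delta^*(K)-\beta)$, and the uniform-convergence transfer, and folding all the $\beta$-scale slacks into the single term $K\beta$, yields with probability $\ge 1-\alpha$
\[
\Pr_{\theta\sim\Gamma}\!\Bigl[\min_{\theta'\in C_i}\|\theta-\theta'\|_\infty\le\epsilon\Bigr]\;\ge\;\bigl(1-\delta^*(K)-K\beta\bigr)\Bigl(1-(1-\tfrac1K)^i\Bigr)\;=\;\frac{1-\delta^*(K)-K\beta}{K}\sum_{j=0}^{i-1}(1-\tfrac1K)^j ,
\]
which is the claimed bound. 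I expect the only genuinely fiddly part to be the bookkeeping that combines the Hoeffding slack and the uniform-convergence slack into exactly $K\beta$ (and pins down the precise constants); the submodular core and the concentration ideas are otherwise routine.
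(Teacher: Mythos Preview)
Your route is genuinely different from the paper's. You first run the full Nemhauser--Wolsey--Fisher greedy analysis on the \emph{empirical} coverage functional $f$, obtaining $f(C_i)\ge(1-(1-1/K)^i)O_K$, and only then transfer to the population via a single uniform-convergence step over unions of $K$ axis-aligned $\ell_\infty$-balls. The paper instead argues by induction on $i$: at each step it compares the greedy gain to one set $\hat S$ from the fixed optimal population cover, applies Hoeffding several times (at scales $\beta/3$ and $\beta/6$) to pass between empirical counts and population probabilities, and lets the $\beta$-slack propagate through the recursion; the identity $\frac{1-\delta^*(K)}{K}-\beta=\frac{1-\delta^*(K)-K\beta}{K}$ is exactly how the $K\beta$ appears.

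Two consequences of this difference are worth flagging. First, your claim that ``this is where the numerical constants $9$ and $5$ in $n\ge\frac{9\log(5/\alpha)}{2\beta^2}$ arise'' is not right: those constants come from the paper's five Hoeffding applications at scale $\beta/3$ (so $\exp(-2n(\beta/3)^2)\le\alpha/5$), not from any VC argument. A genuine VC uniform-convergence bound over $K$-fold unions of $\ell_\infty$-balls would introduce a $dK$-dependent term in the sample complexity, which the paper's stated bound does not have. Second, the slack structure you would naturally obtain is $(1-(1-1/K)^i)(1-\delta^*(K)-\beta)-\beta'$ for some uniform-convergence error $\beta'$, which does not fold into $K\beta$ quite as cleanly as you suggest; the paper's $K\beta$ is an artifact of its recursive bookkeeping. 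On the other hand, your approach has the virtue of confronting head-on the issue you correctly identify as ``the main obstacle'': the data-dependence of $C_i$. The paper applies Hoeffding directly to the events $\{\theta\in\bigcup_{j\le i}A_j\}$ and $\{\theta\in A_i\setminus\bigcup_{j<i}A_j\}$, which are determined by the sample, so strictly speaking a uniform bound of the kind you propose is what is needed to make that step rigorous.
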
 
\begin{proof} We will prove the lemma through induction. We begin by defining the coverage region of each of the $K$ committee member in the optimal parameter-cover as $S^*_i.$ Furthermore, let $\Pi^*$ denote the region covered by this optimal parameter-cover. Thus, $\Pi^*=\bigcup S_i^*.$ Next, let $A_i$ denote the region covered by the representative selected on the $i$-th iteration.  And let  $C_{i}$ denote the set of $\theta$s from the dataset $T$ that are covered after $i$-th iteration.

First of all, we want to show  at $i=1$, the probability for $\theta \sim \Gamma$ getting covered is greater than
$\frac{1-\delta^*(K)-K\beta}{K}\sum_{j=0}^{0}(1-1/K)^0=\frac{1-\delta^*(K)-K\beta}{K}.$

By Hoeffding's theorem, $\Pr_{\theta \sim \Gamma}[\mathbb{E}_{\theta \sim \Gamma}[\mathbf{1}(\theta\in \bigcup_{j=1}^{i-1} A_j)]- \frac{\sum_i \mathbf{1}(\theta_i\in \bigcup_{j=1}^{i-1} A_j)}{N}  ) \ge \frac{\beta}{3}] \le \exp(-2N\beta^2/9)=\frac{\alpha}{5}.$ Hence, with probability at least $1-\frac{\alpha}{5}$,
 $\Pr_{\theta \sim \Gamma} [\theta \in \bigcup_{j=1}^{i-1} A_j] =\mathbb{E}_{\theta \sim \Gamma}[\mathbf{1}(\theta\in \bigcup_{j=1}^{i-1} A_j)] \le \frac{\sum_i \mathbf{1}(\theta_i\in \bigcup_{j=1}^{i-1} A_j)}{N}+\frac{\beta}{3} =\frac{|C_{i-1}|}{N}+\frac{\beta}{3}.$ 

    Now the union bound first gives that $\Pr_{\theta \sim \Gamma}[ \theta \in \Pi^* \land \theta \notin \bigcup_{j=1}^{i-1} A_j ]\ge \Pr_{\theta \sim \Gamma}[ \theta \in \Pi^*] - \Pr_{\theta \sim \Gamma} [\theta \in \bigcup_{j=1}^{i-1} A_j]=1-\delta^*(K) - \Pr_{\theta \sim \Gamma} [\theta \in \bigcup_{j=1}^{i-1} A_j].$ Applying union bound again, we obtain that with probability at least $1-\alpha_1$,
    $\sum_{i=1}^K\Pr_{\theta \sim \Gamma}[\theta \in S_i^* \land \theta \notin \bigcup_{j=1}^{i-1} A_j] \ge \Pr_{\theta \sim \Gamma}[\theta \in \Pi^* \land \theta \notin \bigcup_{j=1}^{i-1} A_j] \ge 1-\delta^*(K)-(\frac{|C_{i-1}|}{N}+\frac{\beta}{3}) $.
    Hence, $\max_{i\in [K]} \Pr_{\theta \sim \Gamma}[\theta \in S_i^* \land \theta \notin \bigcup_{j=1}^{i-1} A_j] \ge \frac{ 1-\delta^*(K)-(\frac{|C_{i-1}|}{N}+\frac{\beta}{3})}{K}.$ Let us call this maximising $S_i^*$ 
    $\hat{S}.$


According to our Algorithm~\ref{alg:greedy_intersection}, $A_i$ covers the most $\theta$s from $T$ that were not covered in the previous rounds by $\bigcup_{j=1}^{i-1} A_j$. In particular, $|C_i|-|C_{i-1}|$ is greater or equal to the number of $\theta$s from $T$ covered in $\hat{S}$ but not $\bigcup_{j=1}^{i-1} A_j$. Let us denote the latter as $s_1$, and the former as $s_2,$ then $s_1-s_2 \le 0.$

Hoeffding's theorem gives us $\Pr_{\theta \sim \Gamma}(\mathbb{E}_{\theta \sim \Gamma}[\mathbf{1}[\theta \in \hat{S} \land \theta \notin \bigcup_{j=1}^{i-1} A_j]]-s_1/N) \ge \frac{\beta}{6}) \le (\frac{\alpha}{5})^4$ and $\Pr_{\theta \sim \Gamma}(s_2/N- \mathbb{E}_{\theta \sim \Gamma}[\mathbf{1}[\theta \in A_i \land \theta \notin \bigcup_{j=1}^{i-1} A_j]] \ge \frac{\beta}{6}) \le (\frac{\alpha}{5})^4$. Hence
with probability at least $1-2(\frac{\alpha}{5})^4$, 
$\mathbb{E}_{\theta \sim \Gamma}[\mathbf{1}[\theta \in \hat{S} \land \theta \notin \bigcup_{j=1}^{i-1} A_j]]-\mathbb{E}_{\theta \sim \Gamma}[\mathbf{1}[\theta \in A_i \land \theta \notin \bigcup_{j=1}^{i-1} A_j]]=(\mathbb{E}_{\theta \sim \Gamma}[\mathbf{1}[\theta \in \hat{S} \land \theta \notin \bigcup_{j=1}^{i-1} A_j]]-s_1/N)+ (s_1-s_2)/N+ (s_2/N-\mathbb{E}_{\theta \sim \Gamma}[\mathbf{1}[\theta \in A_i \land \theta \notin \bigcup_{j=1}^{i-1} A_j]]\le\frac{\beta}{6}+\frac{\beta}{6}= \frac{\beta}{3}.$

Applying the result we obtained at the beginning of the proof, we have with probability at least $1-\frac{\alpha}{5}-2(\frac{\alpha}{5})^4$,

\begin{align}
     &\Pr_{\theta \sim \Gamma}[\theta \in A_i \land \theta \notin \bigcup_{j=1}^{i-1} A_j] \nonumber \\
     \ge& \Pr_{\theta \sim \Gamma}[\theta \in \hat{S}\land \theta \notin \bigcup_{j=1}^{i-1} A_j] -\frac{\beta}{3} \nonumber
     \\ \ge& \frac{ 1-\delta^*(K)-(\frac{|C_{i-1}|}{N}+\frac{\beta}{3})}{K} -\frac{\beta}{3} \label{step1}.
\end{align}




Since nothing is covered before the first iteration, we can use step (\ref{step1}) with $|C_{0}|=0$ to prove the base condition for the claim.
Because $K\ge1$,  we have $\frac{ 1-\delta^*(K)-\frac{\beta}{3}}{K} -\frac{\beta}{3} =\frac{ 1-\delta^*(K)-\frac{(1+K)\beta/3}{K}}{K} \ge \frac{1-\delta^*(K)-K\beta}{K}.$

The induction hypothesis is that for all $i\le K-1$,
we have $ \Pr_{\theta \sim \Gamma}[\theta \in \bigcup_{j=1}^{i} A_j] \ge \frac{1-\delta^*(K)-K\beta}{K} \sum_{j=0}^{i}(1-1/K)^j.$

By Hoeffding, $\Pr_{\theta \sim \Gamma}[|\Pr_{\theta \sim \Gamma}[\theta \in \bigcup_{j=1}^{i-1} A_j]- \frac{|C_{i-1}|}{N}| \ge \beta/3 ] \le 2\exp(-2N\beta^2/9).$ In other words, with probability at least $1-2\frac{\alpha}{5}$, $\Pr_{\theta \sim \Gamma}[\theta \in \bigcup_{j=1}^{i-1} A_j]\ge \frac{|C_{i-1}|}{N}-\beta/3 $ and $ \frac{|C_{i-1}|}{N} \ge \Pr_{\theta \sim \Gamma}[\theta \in \bigcup_{j=1}^{i-1} A_j]- \beta/3.$


Then at the step $i=K$, since for $\frac{\alpha}{5}\in (0,1), (\frac{\alpha}{5})^4 <\frac{\alpha}{5}$, we have with probability at least $1-2\frac{\alpha}{5}-\frac{\alpha}{5}-2(\frac{\alpha}{5})^4 \ge 1-5 \frac{\alpha}{5}=1-\alpha$,
\begin{align*}
    &\Pr_{\theta \sim \Gamma}[\theta \in \bigcup_{j=1}^{i} A_j]\\
    =&\Pr_{\theta \sim \Gamma}[\theta \in \bigcup_{j=1}^{i-1} A_j]+   \Pr_{\theta \sim \Gamma}[\theta \in A_i \land \theta \notin \bigcup_{j=1}^{i-1} A_j]  \\
    \ge &\frac{|C_{i-1}|}{N}-\frac{\beta}{3}             +\frac{ 1-\delta^*(K)-(\frac{|C_{i-1}|}{N}+\beta/3)}{K} -\frac{\beta}{3}  \\
    = &\frac{1-\delta^*(K)}{K}+(1-1/K) \frac{|C_{i-1}|}{N} - \frac{(2K+1)\beta}{3K}\\
    \ge &\frac{1-\delta^*(K)}{K}+(1-1/K) (\Pr_{\theta \sim \Gamma}[\theta \in \bigcup_{j=1}^{i-1} A_j]-\beta/3)- \frac{(2K+1)\beta}{3K} \\
    \ge &\frac{1-\delta^*(K)}{K}+(1-1/K)\bigl(\frac{1-\delta^*(K)-K\beta}{K}  \sum_{j=0}^{i-1}(1-1/K)^j\bigr)-(1-1/K)\beta/3- \frac{(2K+1)\beta}{3K} \\
    =&\frac{1-\delta^*(K)}{K}-\frac{(2K+1+K-1)\beta}{3K} +\frac{1-\delta^*(K)-K\beta}{K} \sum_{j=1}^{i}(1-1/K)^j\\
   = &\frac{1-\delta^*(K)-K\beta}{K} \sum_{j=0}^{i}(1-1/K)^j
\end{align*}
\end{proof}

\begin{proof}[Proof of Theorem \ref{thm:greedy_gaurantee}]
We can directly apply lemma~\ref{lm:pf4} to $i=K$. Call the region defined by the cover generated by Algorithm~\ref{alg:greedy_intersection} $\Pi_K=\bigcup_{j=1}^{K} A_j.$ Using the inequality $(1-1/K)^K \ge 1-1/e$ for all $K \ge 0$, we have

\begin{align*}
   \Pr_{\theta \sim \Gamma}[\theta \in \Pi_K] 
   \ge&
\frac{1-\delta^*(K)-K\beta}{K} \sum_{j=0}^{K}(1-1/K)^j\\
=&
\frac{1-\delta^*(K)-K\beta}{K} \frac{1-(1-1/K)^K}{1-(1-1/K)}\\
=&
(1-\delta^*(K)-K\beta)({1-(1-1/K)^K})\\
\ge& (1-1/e)(1-\delta^*(K)-K\beta).
\end{align*}

\end{proof}

\section{Proof of Theorem \ref{thm:softmax}}
\label{A:softmax}
\begin{proof}
Let us call the optimal solutions set to \eqref{E:optK} $A_1$, and the optimal solutions set to \eqref{E:relaxK} $A_2$.

We first show $A_1\subset A_2$. Pick any $\{\theta_1,\dots,\theta_K\}\in A_1$. Due to the premise, for each $i$, since $ \min_{k \in [K]}    \|\theta_k-\theta_i\|_\infty - \epsilon \le 0$ ,  there exists $\theta_{k^*}$ such that $ \|\theta_{k^*}-\theta_i\|_\infty - \epsilon \le 0$.
Thus, 
we can have $w_{ik^*}=1$, and $w_{ik}=0 $ for all the other $k \ne k^*$. 
Then 
$\mathbf{ReLU}\left(\left\{\sum_{k \in [K]} \sigma(w_{ik})\|\theta_{k}-\theta_i\|_\infty\right\} -\epsilon\right)=\mathbf{ReLU} (\|\theta_{k^*}-\theta_i\|_\infty -\epsilon)=0.$  
By setting $w$ this way, we could achieve the zero loss for the relaxation problem. 
Hence  $\{\theta_1,\dots,\theta_K\}\in A_2$.

Now to show $A_2\subset A_1$, suppose $ \{\theta_1,\dots,\theta_K\}, w$ is a optimal solution. Due to the premise, we must have that $\mathbf{ReLU}\left(\left\{\sum_{k \in [K]} \sigma(w_{ik})\|\theta_{k}-\theta_i\|_\infty\right\} -\epsilon\right)=0$ for each $i$.
Now fix $i$, since each $\sigma(w_{ik})$ is nonegative and summing them over $k$ yields $1$, there must be some  positive coordinate $\sigma(w_{ik'})$. Hence for all such $k'$, $\mathbf{ReLU} (\|\theta_{k'}-\theta_i\|_\infty-\epsilon)=0$, 
i.e.,
$\|\theta_{k'}-\theta_i\|_\infty \le \epsilon$. 
Thus,
$\min_{k \in [K]} \|\theta_{k}-\theta_i\|_\infty \le \|\theta_{k'}-\theta_i\|_\infty\le \epsilon$ also holds, and  $ \max_{\{\theta_1,\ldots,\theta_K\}} \sum_i \mathbf{1}(\min_{k \in [K]} \|\theta_{k}-\theta_i\|_\infty \le \epsilon)=n$. Consequently, $\{\theta_1,\dots,\theta_K\}\in A_1$.
\end{proof}

\section{Proof of Lemma \ref{thm:linear_reg}}
 \label{A:simmulation}
\begin{align*}
    V_i^{\pi_i^*}  =&
    \mathbb{E}[\sum_{t=0}^T \gamma^t r_{\theta_i}(s_t,a_t) \ |  \pi_i^* ]\\
    =&
    \mathbb{E}[\sum_{t=0}^T \gamma^t (r_{\theta_i}(s_t,a_t)-r_{\theta_j}(s_t,a_t)+r_{\theta_j}(s_t,a_t))\ |  \pi_i^*]\\
    =&
    \mathbb{E}[\sum_{t=0}^T \gamma^t (r_{\theta_i}(s_t,a_t)-r_{\theta_j}(s_t,a_t) )\ |  \pi_i^* ] 
    + \mathbb{E}[\sum_{t=0}^T \gamma^t r_{\theta_j}(s_t,a_t) \ |  \pi_i^* ]\\ 
    = &\mathbb{E}[\sum_{t=0}^T \gamma^t (r_{\theta_i}(s_t,a_t)-r_{\theta_j}(s_t,a_t) )\ |  \pi_i^* ] +V_j^{\pi_i^*} \\
  \le & \sum_{t=0}^T \gamma^t L||\theta_i-\theta_j||_\infty
  +V_j^{\pi_j^*} (- V_i^{\pi_j^*}+V_i^{\pi_j^*})\\
  \le & L\frac{\gamma^{T+1}-1}{\gamma-1}  \epsilon + (V_2^{\pi_j^*}- V_i^{\pi_j^*})+ V_i^{\pi_j^*}\\
    =& L\frac{\gamma^{T+1}-1}{\gamma-1}\epsilon + \mathbb{E}[\sum_{t=0}^T \gamma^t r_{\theta_i}(s_t,a_t)-r_{\theta_j}(s_t,a_t) \ |  \pi_j^* ]
    + V_i^{\pi_j^*} \\
  \le & 2L\frac{\gamma^{T+1}-1}{\gamma-1} \epsilon   + V_i^{\pi_j^*} 
\end{align*}

If the discount factor $\gamma = 1$, the argument is as follows:

\begin{align*}
V_i^{\pi_i^*}=&
    \mathbb{E}[\sum_{t=0}^T  r_{\theta}(s_t,a_t) \ |  \pi_i^* ]\\
    =&
    \mathbb{E}[\sum_{t=0}^T  r_{\theta}(s_t,a_t)-r_{\theta_j}(s_t,a_t)+r_{\theta'}(s_t,a_t)\ |  \pi_i^*]\\
    =&
    \mathbb{E}[\sum_{t=0}^T r_{\theta}(s_t,a_t)-r_{\theta_j}(s_t,a_t) \ |  \pi_i^* ]
    + \mathbb{E}[\sum_{t=0}^T  (r_{\theta_j}(s_t,a_t) \ |  \pi_i^* ]\\ 
    =& \mathbb{E}[\sum_{t=0}^T  r_{\theta_i}(s_t,a_t)-r_{\theta_j}(s_t,a_t) \ |  \pi_i^* ] +V_j^{\pi_i^*} \\
  \le & 
\sum_{t=0}^T  L||\theta_i-\theta_j||_\infty
  +V_j^{\pi_j^*} (- V_i^{\pi_j^*}+V_i^{\pi_j^*})\\
  \le& T L \epsilon + (V_j^{\pi_j^*}- V_i^{\pi_j^*})+ V_i^{\pi_j^*}\\
  =&TL\epsilon + \mathbb{E}[\sum_{t=0}^T r_{\theta_i}(s_t,a_t)-r_{\theta_j}(s_t,a_t) \ |  \pi_j^* ]  + V_i^{\pi_j^*}      \\
  \le & 2TL\epsilon   + V_i^{\pi_j^*}          
\end{align*}   

\section{Proof of Theorem~\ref{thm:online_repetition}} \label{A:adaptation}
We prove this by leveraging the following lemma by~\citet{azar2013regret}.

    \begin{definition}
The average expected reward for a given policy is measured per time step as
\[
\mu^{\pi}(s) =  \frac{1}{h} \mathbb{E}\left[\sum_{t=1}^{h} r(s_t, \pi(s_t)) \mid s_0 = s \right].
\]
And the empirical average return after $n$ episodes is \[\hat{\mu}^{\pi}=\frac{1}{nh}\sum_{i=0}^n \sum_{t=0}^h r(s_t, \pi(s_t)).\]

\end{definition}

\begin{assumption}
\label{Assumption:first}
    There exists a policy $\pi^+ \in \Pi$, which induces a unichain Markov process on the MDP $M$, such that the average reward $ \mu^{\pi^+} \ge \mu^\pi(s)$ $\forall s \in \mathcal{S}$ and  any policy $\pi \in \Pi$. The span of the bias function is $\mathrm{sp}(\lambda^{\pi^+})= \max_s\lambda^{\pi^+}(s) - \min_s\lambda^{\pi^+}(s) \le H$ for some constant $H$, where $\lambda$ the bias is defined as $\lambda^{\pi}(s)+\mu^\pi=\mathbb{E}[r(s,\pi(s))+\lambda^\pi(s')]$, with $s'$ being the next state after the interaction $(s,\pi(s))$.
\end{assumption}

\begin{assumption}
    \label{Assumption:fewshot}
    Suppose that each $\pi \in \Pi$ induces on the MDP $\mc{M}$ a single recurrent class with some additional transient states, i.e., $\mu^\pi(s) = \mu^{\pi}$ for all $s \in \mathcal{S}$, and $\mathrm{sp}(\lambda^{\pi}) \le H$ for some finite $H$.
\end{assumption}

\begin{lemma}\citep[Lemma 1]{azar2013regret} \label{lemma: online_estimation}
Under Assumption~\ref{Assumption:first} and~\ref{Assumption:fewshot},
   $ |\hat{\mu}^\pi-\mu^\pi| \le 2(H+1)\sqrt{\frac{2\log(2/\alpha)}{ph}}+\frac{H}{h}$ with probability at least $1-\alpha$.
\end{lemma}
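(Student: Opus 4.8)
The plan is to reconstruct the standard average-reward concentration argument (as in \citet{azar2013regret}): write the episodic return of $\pi$ as $\mu^\pi$ per step plus a telescoping bias term plus a martingale, then control each piece separately. First I would invoke Assumption~\ref{Assumption:fewshot}: since $\pi$ induces a single recurrent class (with possibly some transient states) on $\mc{M}$, the average-reward Poisson (Bellman) equation $\lambda^\pi(s) + \mu^\pi = \mathbb{E}[\,r(s,\pi(s)) + \lambda^\pi(s') \mid s\,]$ admits a solution with $\mu^\pi$ independent of $s$ by standard unichain MDP theory, and by assumption $\mathrm{sp}(\lambda^\pi) \le H$. Define for each step $Y_t = r(s_t,\pi(s_t)) + \lambda^\pi(s_{t+1}) - \lambda^\pi(s_t) - \mu^\pi$; the Poisson equation says precisely that $\mathbb{E}[Y_t \mid \mathcal{F}_t] = 0$, where $\mathcal{F}_t$ is the history through the visit to $s_t$, so $(Y_t)$ is a martingale difference sequence, and $|Y_t| \le H + 1$ since $r \in [0,1]$, $\mu^\pi \in [0,1]$, and $|\lambda^\pi(s_{t+1}) - \lambda^\pi(s_t)| \le \mathrm{sp}(\lambda^\pi) \le H$.

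Next I would telescope within a single length-$h$ episode: summing the identity $r(s_t,\pi(s_t)) = \mu^\pi + \lambda^\pi(s_t) - \lambda^\pi(s_{t+1}) + Y_t$ over $t = 1,\dots,h$ gives $\sum_{t=1}^h r(s_t,\pi(s_t)) = h\mu^\pi + \lambda^\pi(s_1) - \lambda^\pi(s_{h+1}) + \sum_{t=1}^h Y_t$. Summing over the $p$ independent episodes and dividing by $ph$ yields $\hat{\mu}^\pi - \mu^\pi = \frac{1}{ph}\sum_{i=1}^p\bigl(\lambda^\pi(s_1^{(i)}) - \lambda^\pi(s_{h+1}^{(i)})\bigr) + \frac{1}{ph}\sum_{i=1}^p\sum_{t=1}^h Y_t^{(i)}$. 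The first (telescoping) term is bounded deterministically by $\frac{1}{ph}\cdot p\cdot \mathrm{sp}(\lambda^\pi) \le \frac{H}{h}$, which accounts for the additive $H/h$ in the claimed inequality.

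For the second term I would note that concatenating the $p$ episodes produces a single martingale difference sequence of total length $N = ph$: at each episode boundary the new initial state is drawn from $\rho$ independently of the past, so $\mathbb{E}[Y_1^{(i+1)} \mid \mathcal{F}] = 0$ still holds by applying the Poisson equation at the reset state, and each increment is bounded by $H+1$ in absolute value. Azuma--Hoeffding then gives $\Pr\bigl[\,|\sum_{i,t} Y_t^{(i)}| \ge \lambda\,\bigr] \le 2\exp\!\bigl(-\lambda^2 / (2N(H+1)^2)\bigr)$; setting the right side equal to $\alpha$ and dividing by $ph$ shows $|\frac{1}{ph}\sum_{i,t}Y_t^{(i)}| \le 2(H+1)\sqrt{\frac{2\log(2/\alpha)}{ph}}$ with probability at least $1-\alpha$, where the extra factor of $2$ over the bare Azuma bound follows from the two-sided/range version of the inequality (equivalently, by splitting $Y_t$ into its reward and bias-increment parts and union-bounding). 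A triangle inequality combining this with the deterministic $H/h$ bound gives the statement. The main obstacle I anticipate is the careful bookkeeping around the bias function, namely verifying that $\lambda^\pi$ is well defined with span at most $H$ even with transient states present, and that the martingale property is genuinely preserved across episode resets; once $(Y_t^{(i)})$ is set up correctly, the concentration step is routine.
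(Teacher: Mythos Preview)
The paper does not supply its own proof of this lemma: it is imported verbatim as Lemma~1 of \citet{azar2013regret} and then used as a black box in the proof of Theorem~\ref{thm:online_repetition}. Your reconstruction is essentially the argument from that reference --- the Poisson-equation decomposition of the per-step reward into $\mu^\pi$ plus a telescoping bias difference plus a bounded-increment martingale, followed by Azuma--Hoeffding --- correctly adapted to the $p$-episode setting used here (which is why the telescoping contribution comes out as $pH/(ph)=H/h$ rather than the $H/n$ one gets for a single long trajectory). So there is nothing to compare against beyond the citation, and your proof sketch is sound.

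Two small remarks. First, your bound $|Y_t|\le H+1$ relies on rewards lying in $[0,1]$; this is the standing assumption in \citet{azar2013regret} but is not restated explicitly in the present paper, so you should flag it. Second, the bare Azuma computation you wrote actually delivers $(H{+}1)\sqrt{2\log(2/\alpha)/(ph)}$, a factor of~$2$ tighter than the stated bound; the leading $2$ in the lemma is simply the constant inherited from \citet{azar2013regret}, so your slightly informal justification for it (``two-sided/range version'' or splitting and union-bounding) is harmless but also unnecessary --- a weaker constant trivially implies the stated one, and nothing downstream depends on sharpness here.
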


\begin{proof}
Let $p=\frac{32h(H+1)^2\log(4/\alpha)}{(\beta-2H)^2}.$
 Denote the average rewards of the best and second best policy in the committee as $\mu^+,\mu^-$. If $\mu^+-\mu^->\beta/h,$ by ensuring the difference between the estimation and the true average reward is small than $\beta/2h.$ We can make sure we have picked the best policy. From Lemma \ref{lemma: online_estimation}, we know  $\Pr[ \hat{\mu}^{-} \le{\mu}^{-}+2(H+1)\sqrt{\frac{2\log(4/\alpha)}{ph}}+\frac{H}{h} ] =\Pr[\hat{\mu}^{-} \le \mu^-+\beta/2h ]\ge 1-\alpha/2.$
And $\Pr[\hat{\mu}^{+}  \ge \mu^+ -2(H+1)\sqrt{\frac{2\log(4/\alpha)}{ph}}+\frac{H}{h}]=\Pr[\hat{\mu}^{+} \le \mu^+-\beta/2h ]  \ge 1-\alpha/2$.
Hence with probability at least $1-\alpha$, $\hat{\mu}^{+}> \mu^+-\beta/2h \ge {\mu}^-+\beta/h-\beta/2h=\mu^-+\beta/2h>\hat{\mu}^-.$ Thus the empirically best policy we have picked is also the best in expectation. Now if $\mu^+-\mu^-<\beta/h$, no matter which one we pick, we have the difference bound by $\beta/h.$ The same holds for all pairs of policies ordered based on their expected values.
Either way, with probability $1-\alpha,$ we could find the best policy in the committee. 
Since our committee is a $(\epsilon,1-\delta)$ cover, we are able to pick the policy with suboptimality $\beta+\epsilon$ with probability $1-\delta-\alpha.$
\end{proof}

\section{Additional Empirical Results }\label{S:appendix-metaworld}
\subsection{Results on Humanoid Direction}

\begin{figure*}[h!]
\label{F:humanoid}
\centering
\includegraphics[width=0.5\textwidth]{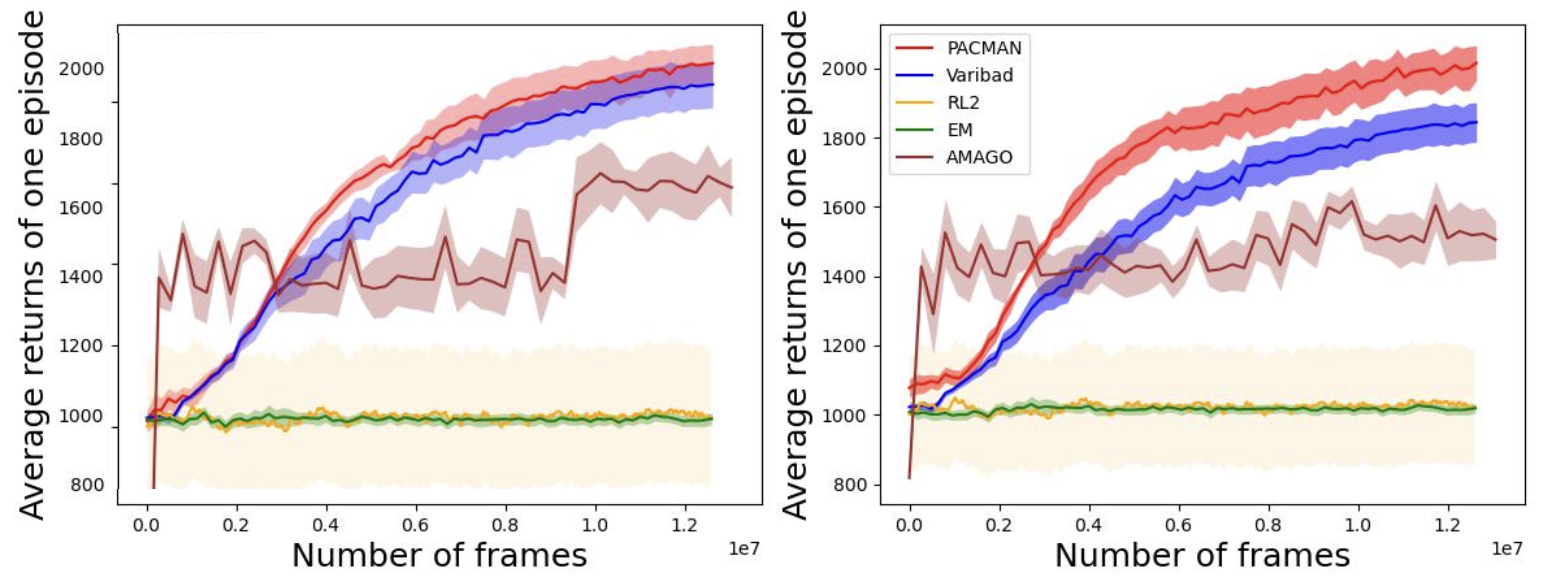}
     \caption{Humanoid-Direction Train and Zero Shot}
\end{figure*}
We present the learning curves for both the training and zero-shot testing case as Figure~\ref{F:humanoid}. The few shot result has been listed in Table~\ref{T:mujoco-fewshot}.

\subsection{Additional results for empirical investigation of our method}\label{A:hist}

\subsubsection{Clustering Ablations}
We also obtained different clusters using PACMAN than using Kmeans++ and, as a result, much better performance for Halfcheetah-Velocity, as shown in the table below:

\begin{table}[h!]
\vspace{-5pt}
\centering
\caption{Comparison of clustering algorithms in HalfCheetah-Velocity, Zero-Shot.}
\label{table:halfcheetah_clustering}
\footnotesize
\begin{tabular}{|l|c|c|}
\hline
\textbf{Method} & \textbf{6 Million Frames} & \textbf{12 Million Frames} \\
\hline
KMeans++    & $-105.91  \pm  3.44$  & $-89.50 \pm 3.10$  \\

DBScan   & $-234.05 \pm 9.56 $ & $-213.42  \pm 4.09$  \\
GMM         & $-239.32  \pm 11.27$ & $ -199.86 \pm 9.54$  \\
Random    & $-274.13\pm 16.76$  & $-258.07 \pm  13.62$  \\
\hline
\textbf{PACMAN}     & \textbf{ -97.42 $\pm$  3.70}  & \textbf{-74.20 $\pm$  6.76} \\\hline

\end{tabular}
\end{table}

As shown in Table~\ref{T:clustering_ablation_mujoco}, the proposed approach outperforms all baselines.

While the performance of the KMeans++ algorithm appears relatively close to our method due to the significant gap between it and the other three clustering methods (DBSCAN, GMM, and Random), we emphasize that this result considers one hundred percent of the population.

The advantage of our algorithm becomes even more apparent when focusing on the welfare of the majority. To illustrate this, we present a histogram of rewards for individual test tasks during zero-shot testing using policies trained with our algorithm versus KMeans++ on the Half-Cheetah-Velocity benchmark:

\begin{figure*}[h]
\centering
    \includegraphics[width=0.45\textwidth]{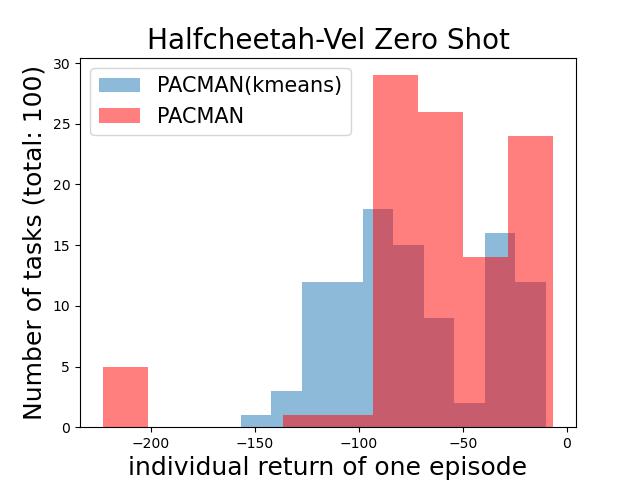}
    \caption{Histogram comparison of two clustering methods for zero-shot individual task rewards in Half-Cheetah (velocity).}
    \label{fig:hist}
\end{figure*}

The results vividly highlight a significantly greater density of high-performing tasks (red regions on the right) with our method. This suggests that our approach effectively promotes superior task performance while minimizing underperformance. In contrast, KMeans++ yields a more uniform but mediocre distribution of task performance. There is an ideological difference between these two clustering methods.

\subsubsection{Hyperparameter Ablations}

We consider here additional ablations varying $K$ and $\epsilon$ omitted from the main body.

First, we present the results of ablations on $K$ on both Mujoco (Halfcheetah-Velocity) and Meta-World.

\begin{figure*}[h]
\centering
     \includegraphics[width=0.23\textwidth]{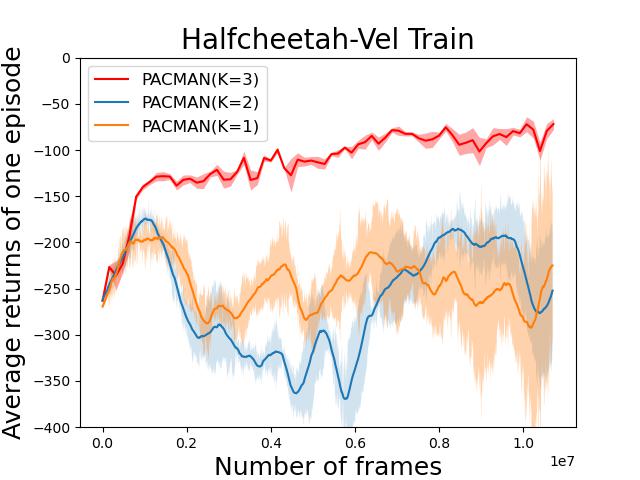}
     \includegraphics[width=0.23\textwidth]{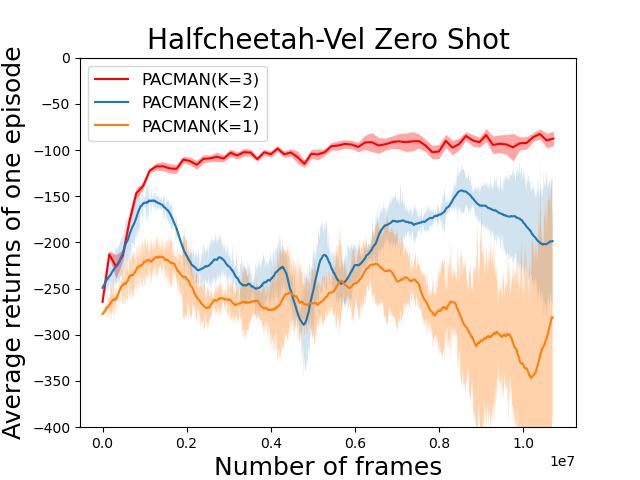}
     \includegraphics[width=0.23\textwidth]{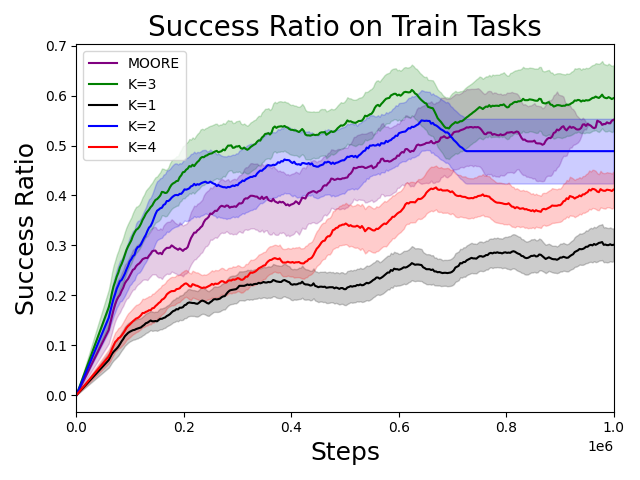}
    \includegraphics[width=0.23\textwidth]{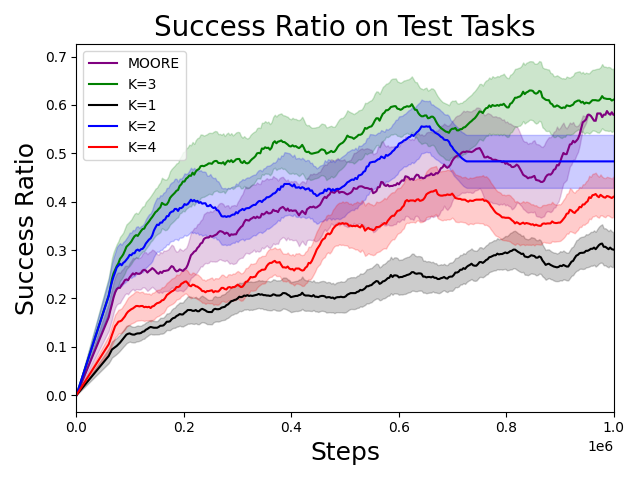}
\caption{Varying $K$ from 1 to 3 for Halfcheetah-Velocity (left two) and 1 to 4 for Meta-World (right two)}
\end{figure*}

\begin{table}
\vspace{-5pt}
\centering
\caption{Few-shot in Meta-World, varying $K$.}
\label{T:fewshot_ablation_metaworld}
\footnotesize
\begin{tabular}{|l|c|c|}
\hline
\textbf{Method} & \textbf{6K Updates} & \textbf{12K Updates} \\
\hline
MOORE           & $0.42 \pm 0.06$  & $0.43 \pm 0.05$  \\\hline
PACMAN ($K=1$)    & $0.32 \pm 0.05$  & $0.31 \pm 0.04$  \\
PACMAN ($K=2$)     & $0.50 \pm 0.05$  & $0.50 \pm 0.05$  \\
PACMAN ($K=3$)    & $0.61 \pm 0.04$  & $0.62 \pm 0.05$  \\
PACMAN ($K=4$)    & $0.32 \pm 0.05$ & $0.35 \pm 0.05$  \\

\hline
\end{tabular}
\end{table}

Both the ablation results for Meta-World and Mujoco demonstrate a clear advantage of utilizing a policy committee. Especially for Meta-World, our method beats the baseline for every $K$ greater than 1. Here, in few-shot settings, even using $K=2$ already results in considerable improvement over the best baseline (MOORE), with $K=3$ a significant further boost. Another thing to note is that increasing
$K$ is not always better. The results in both the figure and the table show that as the number of tasks becomes increasingly partitioned, the generalization ability of each committee member may weaken. Hence the performance for $K=4$ is worse than $K=3$.



Finally, we show the effect of the \(\epsilon\) hyperparameter in the Meta-World zero-shot setting. These results are reported as the success rate across all tasks for \(K=2\).

\begin{table}[h!]
\centering
\begin{tabular}{|c|c|c|c|}
\hline
               & \(\epsilon=.4\) & \(\epsilon=.7\) & \(\epsilon=1\) \\ \hline
500K Steps     & 0.05          & 0.28          & 0.29         \\ \hline
1M Steps       & 0.05          & 0.31          & 0.40         \\ \hline
\end{tabular}
\caption{Success rate for \(K=2\).}
\end{table}

We present also results for \(K=3\). The main results in the paper are for \(\epsilon=.6\). Results are reported as the success rate over all tasks for 3 seeds. 

\begin{table}[h!]
\centering
\begin{tabular}{|c|c|c|c|c|}
\hline
               & \(\epsilon=.5\)    & \(\epsilon=.6\)      & \(\epsilon=.7\) & \(\epsilon=.8\) \\ \hline
500K Steps     & 0.23 \(\pm\) 0.08  & 0.54 \(\pm\) 0.08    & 0.56 \(\pm\) 0.09 & 0.60 \(\pm\) 0.07 \\ \hline
1M Steps       & 0.25 \(\pm\) 0.07  & 0.60 \(\pm\) 0.10    & 0.58 \(\pm\) 0.14 & 0.58 \(\pm\) 0.07 \\ \hline
\end{tabular}
\caption{Success rate for \(K=3\).}
\end{table}

We find that increasing $\epsilon$ to cover more tasks can also improve performance (for a similar reason that increasing $K$ may not, as higher $\epsilon$ can ensure that we do not end up with clusters with too few tasks). Of course, for sufficiently high $\epsilon$, only a single cluster will emerge, so this, too induces an interesting tradeoff.

\newpage
\section{Meta-World Task Descriptions}
\label{S:llmdesc}

\begin{longtable}{p{1in}p{2in}p{2in}}
    \hline
    \textbf{Task Name} & \textbf{Objective} & \textbf{Environment Details} \\ 
    \hline
    Reach-v1 & Move the robot's end-effector to a target position. & The task is set on a flat surface with random goal positions. The target position is marked by a small sphere or point in space. \\ 
    \hline
    Push-v1 & Push a puck to a specified goal position. & The puck starts in a random position on a flat surface. The goal position is marked on the surface. \\ 
    \hline
    Pick-Place-v1 & Pick up a puck and place it at a designated goal position. & The puck is placed randomly on the surface. The goal position is marked by a target area. \\ 
    \hline
    Door-Open-v1 & Open a door with a revolving joint. & The door can be opened by rotating it around the joint. Door positions are randomized. \\ 
    \hline
    Drawer-Open-v1 & Open a drawer by pulling it. & The drawer is initially closed and can slide out on rails. \\ 
    \hline
    Drawer-Close-v1 & Close an open drawer by pushing it. & The drawer starts in an open position. \\ 
    \hline
    Button-Press-Topdown-v1 & Press a button from the top. & The button is mounted on a panel or flat surface. \\ 
    \hline
    Peg-Insert-Side-v1 & Insert a peg into a hole from the side. & The peg and hole are aligned horizontally. \\ 
    \hline
    Window-Open-v1 & Slide a window open. & The window is set within a frame and can slide horizontally. \\ 
    \hline
    Window-Close-v1 & Slide a window closed. & The window starts in an open position. \\ 
    \hline
    Door-Close-v1 & Close a door with a revolving joint. & The door can be closed by rotating it around the joint. \\ 
    \hline
    Reach-Wall-v1 & Bypass a wall and reach a goal position. & The goal is positioned behind a wall. \\ 
    \hline
    Pick-Place-Wall-v1 & Pick a puck, bypass a wall, and place it at a goal position. & The puck and goal are positioned with a wall in between. \\ 
    \hline
    Push-Wall-v1 & Bypass a wall and push a puck to a goal position. & The puck and goal are positioned with a wall in between. \\ 
    \hline
    Button-Press-v1 & Press a button. & The button is mounted on a panel or surface. \\ 
    \hline
    Button-Press-Topdown-Wall-v1 & Bypass a wall and press a button from the top. & The button is positioned behind a wall on a panel. \\ 
    \hline
    Button-Press-Wall-v1 & Bypass a wall and press a button. & The button is positioned behind a wall. \\ 
    \hline
    Peg-Unplug-Side-v1 & Unplug a peg sideways. & The peg is inserted horizontally and needs to be unplugged. \\ 
    \hline
    Disassemble-v1 & Pick a nut out of a peg. & The nut is attached to a peg. \\ 
    \hline
    Hammer-v1 & Hammer a nail on the wall. & The robot must use a hammer to drive a nail into the wall. \\ 
    \hline
    Plate-Slide-v1 & Slide a plate from a cabinet. & The plate is located within a cabinet. \\ 
    \hline
    Plate-Slide-Side-v1 & Slide a plate from a cabinet sideways. & The plate is within a cabinet and must be removed sideways. \\ 
    \hline
    Plate-Slide-Back-v1 & Slide a plate into a cabinet. & The robot must place the plate back into a cabinet. \\ 
    \hline
    Plate-Slide-Back-Side-v1 & Slide a plate into a cabinet sideways. & The plate is positioned for a sideways entry into the cabinet. \\ 
    \hline
    Handle-Press-v1 & Press a handle down. & The handle is positioned above the robot's end-effector. \\ 
    \hline
    Handle-Pull-v1 & Pull a handle up. & The handle is positioned above the robot's end-effector. \\ 
    \hline
    Handle-Press-Side-v1 & Press a handle down sideways. & The handle is positioned for sideways pressing. \\ 
    \hline
    Handle-Pull-Side-v1 & Pull a handle up sideways. & The handle is positioned for sideways pulling. \\ 
    \hline
    Stick-Push-v1 & Grasp a stick and push a box using the stick. & The stick and box are positioned randomly. \\ 
    \hline
    Stick-Pull-v1 & Grasp a stick and pull a box with the stick. & The stick and box are positioned randomly. \\ 
    \hline
    Basketball-v1 & Dunk the basketball into the basket. & The basketball and basket are positioned randomly. \\ 
    \hline
    Soccer-v1 & Kick a soccer ball into the goal. & The soccer ball and goal are positioned randomly. \\ 
    \hline
    Faucet-Open-v1 & Rotate the faucet counter-clockwise. & The faucet is positioned randomly. \\ 
    \hline
    Faucet-Close-v1 & Rotate the faucet clockwise. & The faucet is positioned randomly. \\ 
    \hline
    Coffee-Push-v1 & Push a mug under a coffee machine. & The mug and coffee machine are positioned randomly. \\ 
    \hline
    Coffee-Pull-v1 & Pull a mug from a coffee machine. & The mug and coffee machine are positioned randomly. \\ 
    \hline
    Coffee-Button-v1 & Push a button on the coffee machine. & The coffee machine's button is positioned randomly. \\ 
    \hline
    Sweep-v1 & Sweep a puck off the table. & The puck is positioned randomly on the table. \\ 
    \hline
    Sweep-Into-v1 & Sweep a puck into a hole. & The puck is positioned randomly on the table near a hole. \\ 
    \hline
    Pick-Out-Of-Hole-v1 & Pick up a puck from a hole. & The puck is positioned within a hole. \\ 
    \hline
    Assembly-v1 & Pick up a nut and place it onto a peg. & The nut and peg are positioned randomly. \\ 
    \hline
    Shelf-Place-v1 & Pick and place a puck onto a shelf. & The puck and shelf are positioned randomly. \\ 
    \hline
    Push-Back-v1 & Pull a puck to a goal. & The puck and goal are positioned randomly. \\ 
    \hline
    Lever-Pull-v1 & Pull a lever down 90 degrees. & The lever is positioned randomly. \\ 
    \hline
    Dial-Turn-v1 & Rotate a dial 180 degrees. & The dial is positioned randomly. \\ 
    \hline
    Bin-Picking-v1 & Grasp the puck from one bin and place it into another bin. & The puck and bins are positioned randomly. \\ 
    \hline
    Box-Close-v1 & Grasp the cover and close the box with it. & The box cover is positioned randomly. \\ 
    \hline
    Hand-Insert-v1 & Insert the gripper into a hole. & The hole is positioned randomly. \\ 
    \hline
    Door-Lock-v1 & Lock the door by rotating the lock clockwise. & The lock is positioned randomly. \\ 
    \hline
    Door-Unlock-v1 & Unlock the door by rotating the lock counter-clockwise. & The lock is positioned randomly. \\ 
    \hline
\end{longtable}

Our test tasks are the following: \emph{assembly}, \emph{basketball}, \emph{bin picking}, \emph{box close}, \emph{button press topdown}, \emph{button press topdown-wall}, \emph{button press}, \emph{button press wall}, \emph{coffee button}, \emph{coffee pull}, \emph{coffee push}, \emph{dial turn}, \emph{disassemble}, \emph{door close}, \emph{door lock}, \emph{door open}, \emph{door unlock}, \emph{drawer close}, \emph{drawer open} , and \emph{faucet close}.

\section{Meta-World Clustering Analysis and Discussion}
Simply put, our method works by having committee members which are innately specialized to specific tasks, as illustrated below. Here committee member 2 is specialized to \textit{door open} and committee member 3 is specialized to \textit{door close}. At the same time, committee member 2 performs \textit{door close} poorly and committee member 2 performs \textit{door open} poorly. A MTRL policy in trying to perform all tasks doesn't perform any particular task well. Our method will select committee member 2 for  \textit{door open} and committee member 3 for \textit{door close}.

To understand if the parametrization discussed in section 3.4 produces suitable clusters we have applied PCA to PCA to a clustering of 10 tasks. We note that the window tasks and drawer tasks are close in task space. Additionally, the dynamics and goals of the \textit{push} and \textit{pick-place} tasks are nearly identical. \textit{Window close} is close to \textit{door open} as both these tasks have the agent needing to move to the horizontally to begin the task. 
\begin{figure}[ht]
\centering
\makebox[\textwidth][l]{ 
    \hspace*{1cm} 
    \begin{minipage}{\dimexpr\textwidth-2cm\relax} 
        \centering
        \includegraphics[width=0.55\textwidth]{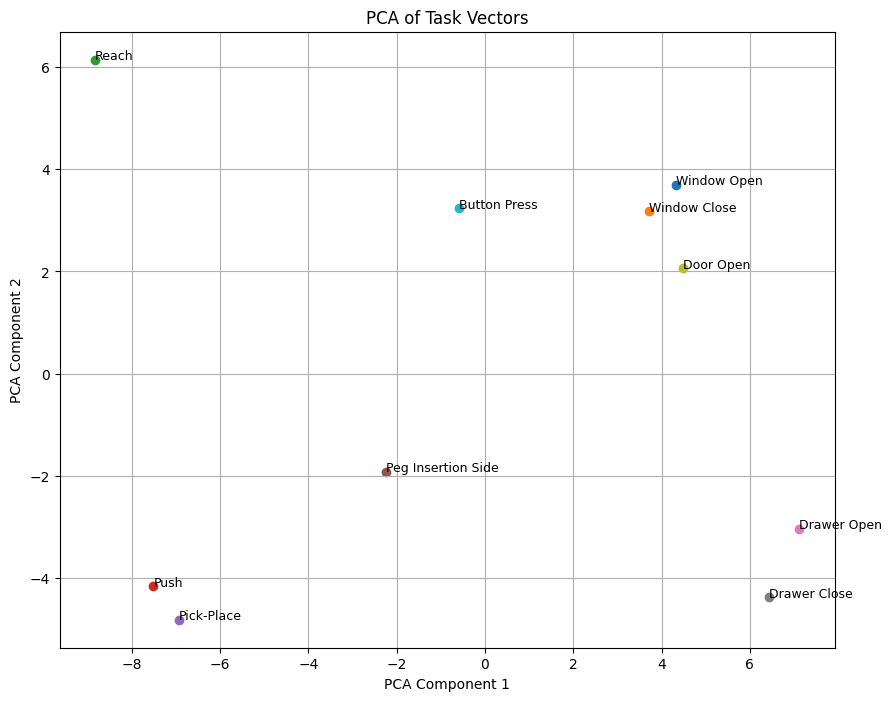}
        \caption{PCA for our parametrization described in 3.4.}
        \label{F:PCA}
    \end{minipage}
}
\end{figure}

When $K=3$, the three Meta-World clusters (for one of the random subsamples of 30 tasks) are provided below.  
\begin{table}[ht]
\centering
\begin{tabular}{|p{4cm}|p{4cm}|p{4cm}|}
\hline
\textbf{Cluster 1} & \textbf{Cluster 2} & \textbf{Cluster 3} \\
\hline
\begin{tabular}[t]{@{}l@{}}
handle-press-side-v1\\
peg-unplug-side-v1\\
pick-out-of-hole-v1\\
pick-place-v1\\
plate-slide-back-v1\\
plate-slide-side-v1\\
plate-slide-v1\\
push-back-v1\\
push-v1\\
reach-v1\\
stick-push-v1
\end{tabular}
&
\begin{tabular}[t]{@{}l@{}}
faucet-open-v1\\
hammer-v1\\
handle-press-side-v1\\
handle-press-v1\\
handle-pull-side-v1\\
peg-insert-side-v1\\
peg-unplug-side-v1\\
pick-out-of-hole-v1\\
pick-place-v1\\
pick-place-wall-v1\\
plate-slide-back-side-v1\\
plate-slide-side-v1\\
plate-slide-v1\\
soccer-v1\\
sweep-v1
\end{tabular}
&
\begin{tabular}[t]{@{}l@{}}
hand-insert-v1\\
handle-pull-v1\\
lever-pull-v1\\
peg-insert-side-v1\\
push-wall-v1\\
reach-wall-v1\\
shelf-place-v1\\
stick-pull-v1\\
stick-push-v1\\
sweep-into-v1\\
window-close-v1\\
window-open-v1
\end{tabular}
\\
\hline
\end{tabular}
\caption{Meta-World Task Clusters}
\label{tab:task-clusters}
\end{table}

On the high level, the clustering can be described as corresponding to three categories of manipulation tasks. The first cluster involves pulling and sliding manipulations (e.g., pull a puck, move end-effector, slide a plate to and from the cabinet). The second cluster includes manipulations that involve pressing and pushing (e.g., rotate the faucet, press a handle down, insert a peg into a hole, kick a soccer ball, sweep a puck). The third cluster consists of tasks involving indirect or constrained manipulation and which thus require better precision and control (e.g., bypass a wall, grasp a stick and pull or push a box with it).

\end{document}